\documentclass[mathpazo]{cicp}

\usepackage[margin=2.5cm]{geometry}
\usepackage{setspace}
\usepackage{lmodern}
\usepackage{amsmath,amssymb}
\usepackage{amsfonts,amsthm}
\usepackage{lineno}
\usepackage{graphicx}
\usepackage{bm, bbm}
\usepackage{algorithm}
\usepackage{algpseudocode}

\usepackage{physics}
\usepackage{color}
\usepackage{pxfonts}
\usepackage[footnotesize,bf]{caption}
\usepackage{subcaption}
\usepackage{mathtools}
\usepackage{hhline}
\usepackage[T1]{fontenc}

\usepackage{caption}
\usepackage{graphicx}
\usepackage{float}
\usepackage{subcaption}

\usepackage{mathrsfs}
\usepackage{placeins}

\numberwithin{equation}{section}
\bibliographystyle{elsarticle-num}

%%%%% author macros %%%%%%%%%
% place your own macros HERE
%%%%% end %%%%%%%%%

\begin{document}
%%%%% title : short title may not be used but TITLE is required.
% \title{TITLE}
% \title[short title]{TITLE}
\title{Frequency-adaptive tensor neural networks for high-dimensional multi-scale problems}

%%%%% author(s) :
% single author:
% \author[name in running head]{AUTHOR\corrauth}
% [name in running head] is NOT OPTIONAL, it is a MUST.
% Use \corrauth to indicate the corresponding author.
% Use \email to provide email address of author.
% \footnote and \thanks are not used in the heading section.
% Another acknowlegments/support of grants, state in Acknowledgments section
% \section*{Acknowledgments}

\author[Huang J Z et.~al.]{
      Jizu Huang\affil{1,2}, and Yue Qiu\affil{3}, Rukang You\affil{1,2}\comma\corrauth}

\address{\affilnum{1}\ SKLMS, Academy of Mathematics and Systems Science, 
        Chinese Academy of Sciences, 
        Beijing, 100190, PR China.\\
        \affilnum{2}\ School of Mathematical Sciences, 
        University of Chinese Academy of Sciences, 
        Beijing 100190, PR China.\\
        \affilnum{3}\ College of Mathematics and Statistics, 
        Chongqing University.}

\emails{{\tt yourukang@lsec.cc.ac.cn} (R.~You)}

%%%%% Begin Abstract %%%%%%%%%%%
\begin{abstract}
  Tensor neural networks (TNNs) have demonstrated 
  their superiority in solving high-dimensional 
  problems. However, similar to conventional neural networks, 
  TNNs are also influenced by the Frequency Principle,
  which limits their ability to accurately capture
  high-frequency features of the solution.
  In this work, we analyze the training dynamics of TNNs by Fourier analysis and enhance their expressivity for high-dimensional multi-scale problems by incorporating random Fourier features.
  Leveraging the inherent tensor structure of TNNs, we further propose a novel approach to extract frequency features of high-dimensional functions by performing the Discrete Fourier Transform to one-dimensional component functions.
  This strategy effectively mitigates the curse of dimensionality.  
  Building on this idea, we propose a frequency-adaptive 
  TNN algorithm, which significantly improves the 
  ability of TNNs in 
  solving complex multi-scale problems. 
  Extensive numerical experiments are performed to validate the effectiveness 
  and robustness of the proposed frequency-adaptive TNN algorithm.
\end{abstract}
%%%%% end %%%%%%%%%%%

%%%%% AMS/PACs/Keywords %%%%%%%%%%%
%\pac{}
\ams{35Q68, 65N99, 68T07}
\keywords{Tensor neural networks, High-dimensional, Multi-scale, Frequency principle, Frequency-adaptive.}

%%%% maketitle %%%%%
\maketitle

\section{Introduction}

Building upon their groundbreaking achievements in computer vision 
\cite{krizhevsky2012imagenet}, 
speech recognition \cite{hinton2012deep}, 
and natural language processing 
\cite{vaswani2017attention, devlin2018bert, brown2020language}, 
deep neural networks (DNNs) have emerged as a promising paradigm for scientific computing, particularly in
solving partial differential equations (PDEs) 
\cite{han2017deep, yu2018deep, zang2020weak, tran2019dnn, han2018solving, han2017deep2, he2018relu, liao2019deep, raissi2019physics, strofer2019data, wang2020mesh}. 
Despite this promise, DNN-based methods still face significant theoretical and computational challenges 
compared to traditional numerical methods such as the finite element and finite volume methods. 
In particular, they often struggle to achieve high-fidelity solutions for
multi-scale problems, especially in high-dimensional settings. Although DNNs
have demonstrated strong approximation capabilities for high-dimensional
functions \cite{han2017deep, zang2020weak, han2018solving, zeng2022adaptive}, their practical
performance is frequently limited by high training costs and optimization
difficulties. 
A further critical challenge arises from the well-known  Frequency Principle (F-Principle), or spectral bias
\cite{rahaman2019spectral, xu2018understanding, xu2019frequency, zhang2019explicitizing, xu2019training}, 
which states that DNNs tend to learn low-frequency 
components of a target function earlier during training. 
This inherent bias limits the ability of DNNs to efficiently 
capture high-frequency features, posing significant obstacles to solving high-dimensional 
multi-scale problems characterized by a broad frequency spectrum. 
These challenges have motivated the development of novel network
architectures that aim to improve computational scalability, training efficiency,
and solution accuracy. Therefore, an effective neural PDE solver for 
high-dimensional multi-scale problems must address two intertwined 
difficulties: the efficient representation and computation of 
high-dimensional solutions, and the spectral bias of neural networks.

To address the first difficulty, tensor decomposition methods provide a natural
framework for improving the representation and computational efficiency of
high-dimensional functions. Classical tensor formats, such as
CANDECOMP/PARAFAC (CP) decomposition \cite{hitchcock1927expression}, Tucker
decomposition \cite{tucker1966some}, and Tensor Train (TT) decomposition
\cite{oseledets2011tensor}, exploit low-rank structures in high-dimensional
functions and have been widely used in high-dimensional approximation and PDE
computation \cite{dahmen2016tensor, schwab2011sparse}. These methods are
motivated by the observation that many high-dimensional PDE solutions possess
hidden low-dimensional structures or can be well approximated on low-complexity
manifolds \cite{cohen2011analytic, bachmayr2017kolmogorov}, thereby improving
computational scalability while maintaining approximation accuracy. 
Recent studies have further connected tensor decomposition methods with 
DNNs through tensorized neural networks
\cite{novikov2015tensorizing}, multiple-input operators \cite{jin2022mionet},
and physics-informed Tensor Neural Networks (TNNs)
\cite{vemuri2025functional, wang2022tensor} inspired by Physics-Informed Neural
Networks (PINNs) \cite{raissi2019physics}. These advances enhance the efficiency
and scalability of neural PDE solvers for high-dimensional problems.
Nevertheless, most existing tensor-based neural methods primarily focus on
high-dimensional representation and computation, while their ability to capture
multi-scale or high-frequency structures remains insufficiently understood.

To overcome the spectral bias of neural networks in learning high-frequency
components, several frequency-enhanced strategies have been developed for
DNN-based methods. Multi-scale Deep Neural Networks (MscaleDNNs)
\cite{liu2020multi, zhang2023correction} employ radial down-scaling mappings
in the frequency domain, which transform high-frequency components into
lower-frequency ones that are easier for standard networks to learn. Since these
radial mappings are independent of the spatial dimension, MscaleDNNs are
particularly attractive for high-dimensional multi-scale problems. Another
effective strategy is to use Fourier feature mappings \cite{tancik2020fourier}
or random Fourier feature mappings \cite{wang2021eigenvector}, which enrich the
input representation and enhance the network's ability to capture
high-frequency components. Despite their effectiveness, these approaches
usually rely on pre-defined frequency parameters, and their performance can be
sensitive to the choice of these parameters. To reduce such parameter
sensitivity, Huang et al. \cite{huang2024frequency} proposed frequency-adaptive MscaleDNNs 
based on a rigorous analysis of the approximation error of DNNs. Leveraging the Discrete Fourier
Transform (DFT), this adaptive approach dynamically adjusts the frequency features during training, 
thereby improving both robustness and accuracy for multi-scale problems. 
However, the integration of such adaptive frequency mechanisms into TNN
frameworks for high-dimensional multi-scale problems remains largely unexplored.

Motivated by the above observations, this work develops frequency-enhanced TNN
frameworks for solving high-dimensional multi-scale PDEs. Building upon PINNs and 
functional tensor decompositions \cite{vemuri2025functional}, we 
introduce two architectures: Canonical Polyadic decomposition-based PINNs (CP-PINNs) and Tensor Train decomposition-based PINNs (TT-PINNs). 
Inspired by the analysis in \cite{xu2018understanding}, we prove that TNNs, similar to standard DNNs, are also affected by the F-Principle. 
This inherent spectral bias limits the ability of TNNs to efficiently capture
high-frequency features, as further confirmed by our numerical experiments. To enhance 
the performance of TNNs on high-dimensional multi-scale problems, we first incorporate random Fourier 
feature mappings into the TNN framework. When the parameters of the Fourier mappings are properly chosen, this combination significantly improves the accuracy of TNNs. 
However, the effectiveness of this approach still depends heavily on the choice of pre-defined frequency parameters. 
To alleviate this sensitivity, we propose a novel frequency-adaptive TNN algorithm, inspired by the work of \cite{huang2024frequency}. 
In this approach, frequency features are adaptively identified by applying the DFT to the component functions of TNNs. Since these 
component functions are one-dimensional, the computational cost of performing the DFT is significantly lower than that of applying it to 
the high-dimensional output of the full network, thereby substantially reducing the computational burden of frequency
identification in high dimensions. Although the resulting frequency set can be oversampled and may contain noisy
components, numerical experiments demonstrate that the proposed frequency-adaptive TNN algorithm 
substantially improves the accuracy of TNNs, reducing the solution error by up to two or three orders of magnitude.

The remainder of the paper is organized as follows.
Section \ref{tnn_lr_fp} introduces two TNN architectures and investigates their F-Principle behavior, based on the 
Kolmogorov $r$-width structure of high-dimensional PDE solutions.
Section \ref{ff_for_tnn} integrates random Fourier features into the TNN framework to address multi-scale problems.
Section \ref{fa} presents the frequency-adaptive TNN algorithm.
Section \ref{ne} provides a comprehensive numerical evaluation using a set of high-dimensional benchmark problems.
Finally, section \ref{conclusion} summarizes the key findings and contributions of this work.

\section{Tensor Neural Networks and their Frequency Principle}
\label{tnn_lr_fp}

High-dimensional PDEs, 
especially those arising in multi-scale physical systems, often admit 
solutions that lie close to low-rank manifolds within suitable
function spaces. This intrinsic low-rank structure motivates 
the development of neural network architectures capable of 
efficiently capturing and exploiting such properties. In 
section \ref{low_rank}, we further illustrate this phenomenon 
using the concept of Kolmogorov $r$-width, 
demonstrating that solutions to high-dimensional PDEs can often 
be accurately approximated by low-rank representations. Building 
on this insight, 
section \ref{ftd} introduces TNNs, which extend the PINN framework 
by incorporating classical tensor low-rank decompositions. 
We present two variants of TNNs: CP-PINNs and TT-PINNs. 
These architectures are particularly well-suited for capturing 
low-rank structures in high-dimensional settings and offer 
enhanced representational efficiency over conventional
neural networks.
In section~\ref{fp_tnn}, we investigate the spectral bias, also 
known as the F-Principle \cite{rahaman2019spectral, xu2019frequency}, 
of TNNs and provide a theoretical explanation of this behavior.

\subsection{Low-rank Representation of High-dimensional Functions}
\label{low_rank}

The low-rank structure of solutions to high-dimensional PDEs in 
infinite-dimensional function spaces has been discussed 
in \cite{cohen2011analytic, bachmayr2017kolmogorov}.
For an elliptic equation $\mathscr{L}u=f$, 
we define the associated solution manifold as:
\begin{equation*}
  \mathfrak{M}:=\{u_f: \mathscr{L}u_f = f, \  f\in \mathfrak{F}\},
\end{equation*}
where $\mathscr{{L}}$ is the differential operator and 
$\mathfrak{F}$ denotes the set of admissible source terms. 
To reduce the complexity of representing high-dimensional functions, 
we approximate the manifold $\mathfrak{M} \subset V$ 
by a low-rank nonlinear manifold in a tensor Banach space $V$, 
as discussed in \cite{falco2019dirac}. 
The space $V$ is typically endowed with a tensor product structure:
\begin{equation*}
  V:=V_1 \otimes V_2 \otimes \cdots \otimes V_d,
\end{equation*}
where each $V_j$ is a normed space corresponding to the 
$j$-th variable. As proved in \cite{falco2019dirac}, the set 
of tensors with a fixed Tucker rank forms an immersed 
sub-manifold of $V$, and moreover, there exists a best approximation 
in the Tucker format (see Corollary 4.18 therein). 

Inspired by this result, we assume that the solution $u_f$ can be approximated 
using the following CP decomposition:
\begin{equation*}
  u_f(x_1,x_2, \dots, x_d) \approx u_f': = \sum_{\alpha=1}^r u_{1,\,\alpha}(x_1)u_{2,\,\alpha}(x_2)\cdots u_{d,\,\alpha}(x_d),
\end{equation*}
where each function $u_{j,\,\alpha} \in V_j, \, \alpha=1,\dots, \, r$. Define the rank-$r$ manifold as:
\begin{equation*}
  {\mathfrak{M}}_r = \left\{\sum_{\alpha=1}^r u_{1,\,\alpha}(x_1)u_{2,\,\alpha}(x_2)\cdots u_{d,\,\alpha}(x_d)\,\Big| \ u_{j,\,\alpha} \in V_j, \ u_{j,\,\alpha}\neq0,\,\ \alpha=1,\dots,\, r, \ j=1,\dots,\, d\right\}.
\end{equation*}
The Kolmogorov $r$-width in the space $V$ is 
defined by
\begin{equation*}
  d_r(\mathfrak{M}, V):= \inf_{{\mathfrak M}_r \subset V} ~~\sup_{u_f\in \mathfrak{M}}~~ \inf_{u_f'\in {\mathfrak M}_r} \|u_f - u_f'\|_{V},
\end{equation*}
which is commonly used to quantify the approximation capacity of rank-$r$ subsets 
and assess the quality of low-rank approximations to $\mathfrak{M}$.
Here: 
\begin{itemize}
  \item $d_r(\mathfrak{M}, V)$ measures the worst-case error of best rank-$r$ 
  approximation to the elements of $\mathfrak{M}$ 
  using functions from $\mathfrak{M}_r$;
  \item A rapid decay of $d_r$ (e.g., exponential decay) indicates that 
  $u_f\in\mathfrak{M}$ lies near a rank-$r$ manifold.
\end{itemize}

The Kolmogorov $r$-width serves as a rigorous measure of the compressibility 
of the solution manifold $\mathfrak M$. A small $r$-width 
implies that high-dimensional functions can be effectively approximated 
using low-rank structures, such as reduced 
basis methods \cite{patera2007reduced}, tensor 
decompositions \cite{khoromskij2011tensor}, or 
separable neural networks \cite{chen2024solving}, particularly in the context 
of parameterized and high-dimensional PDEs. Remarkably, this favorable 
low-rank behavior is often preserved even for nonlinear PDEs, provided the 
solution operator possesses sufficient regularity, such as compactness or Lipschitz continuity with respect to separable variations in the input data. 
This observation helps to explain the empirical success of the TNN-based methods 
proposed in this work, especially in their ability to capture the intrinsic 
structure of high-dimensional solution manifolds while significantly reducing 
computational complexity.

\subsection{Tensor Neural Networks}
\label{ftd}

An effective approach for approximating multivariate functions, 
particularly in high-dimensional settings, 
is the separation of variables. 
A classical method for implementing this idea is
tensor decomposition, which represents high-dimensional functions
as outer products of univariate functions, a technique commonly 
known as {\it functional tensor decomposition}. In this work, we 
focus on two widely used tensor decomposition formats, the CP
decomposition \cite{hitchcock1927expression} and the
TT decomposition \cite{oseledets2011tensor}, to construct TNNs. As shown 
in Figure~\ref{ftd_pinn}, each component function of these tensor decompositions 
is approximated by an individual neural network.
Moreover, we illustrate how these decompositions can be seamlessly integrated 
into the PINN framework \cite{vemuri2025functional}, 
enabling efficient
approximation of solutions to high-dimensional PDEs.

\begin{figure}[H]
  \centering
  \begin{subfigure}[b]{0.45\textwidth}
      \centering
      \includegraphics[width=\textwidth]{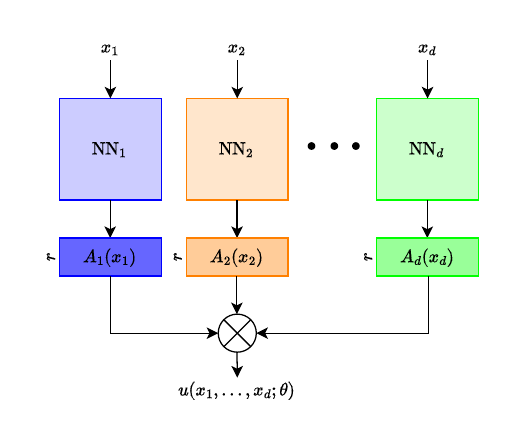}
      \caption{CP-PINNs}
      \label{tnn_cp}
  \end{subfigure}
  \hfill
  \begin{subfigure}[b]{0.45\textwidth}
      \centering
      \includegraphics[width=\textwidth]{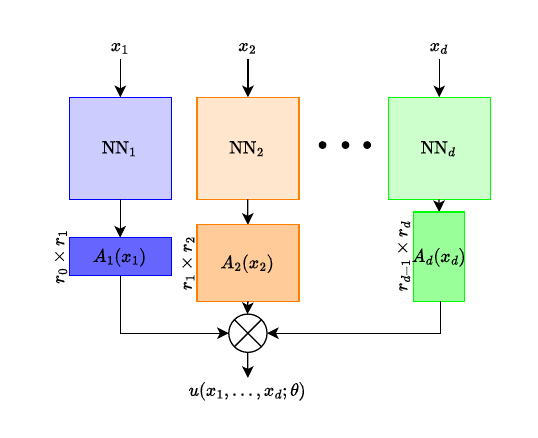}
      \caption{TT-PINNs}
      \label{tnn_tt}
  \end{subfigure}
  \caption{In TNNs, 
  each component function, corresponding to a single variable, is approximated
  by an individual neural network. 
  The resulting outputs are then combined according to  
  (a) the CP and (b) the TT decomposition.}
  \label{ftd_pinn}
\end{figure}

\textbf{CP decomposition} decomposes a tensor of order
$d$ into $d$ factor components with a specified rank $r$ \cite{hitchcock1927expression}. 
For a multivariate function $u(x_1,x_2,\dots,x_d)$, the functional CP decomposition 
takes the form 
\begin{equation}
  \label{cp_eq}
  u(x_1,x_2,\dots,x_d) \approx \left[\mathbf{A}_1(x_1)\odot \mathbf{A}_2(x_2) \dots \odot \mathbf{A}_d(x_d) \right] \cdot \bm{1}_r,
\end{equation}
where each factor component $\mathbf{A}_j (x_j):=(u_{j,\,1}(x_j),\,\ldots,\,u_{j,\,r}(x_j)) \in \mathbb{R}^r$ for $j=1,\dots,\ d$, $\bm{1}_r$ is an $r$-dimensional 
column vector with all entries equal to $1$,  
$\odot$ denotes the Hadamard (element-wise) product, 
and $\cdot$ represents the inner vector product.
This decomposition \eqref{cp_eq} can be
incorporated into the TNN-based PINN framework, 
as illustrated in Figure~\ref{tnn_cp}, resulting in the CP-PINN architecture. 
To further enhance the approximation ability of CP-PINNs, we can replace the 
fixed vector $\bm{1}_r$ with a
trainable parameter vector $\mathbf{W}\in\mathbb{R}^{d}$.

\textbf{TT decomposition} represents a high-dimensional 
tensor as a sequence of low-dimensional 
tensors (cores) connected in a chain-like structure, 
hence the name "train" \cite{oseledets2011tensor}.
For a multivariate function $u(x_1,x_2,\dots,x_d)$, the functional TT decomposition is expressed as: 
\begin{equation}
  \label{tt_eq}
  u(x_1,x_2,\dots,x_d) \approx \mathbf{A}_1(x_1)\times_1 \mathbf{A}_2(x_2) \times_1 \dots \times_1 \mathbf{A}_d(x_d),
\end{equation}
where $\times_1$ denotes the contracted product, and each core is given by
$\mathbf{A}_j(x_j):=(u_{j,\,\alpha_1,\,\alpha_2}(x_j))\in \mathbb{R}^{r_{j-1}\times r_j}$.
{The multi-TT ranks are $r_0, \ r_1, \dots, r_d$ with $r_0=r_d=1$. 
The contracted product $\times_1$ refers to the contraction of the last index of the first core tensor with the first index of the second core tensor.} 
In contrast to the CP decomposition, the TT decomposition
establishes explicit connections between the tensors associated with adjacent dimensions, improving numerical stability and expressiveness. 
This decomposition \eqref{tt_eq} can also be effectively integrated into 
the TNN-based PINN framework, as illustrated in Figure~\ref{tnn_tt}, 
resulting in the TT-PINN architecture.

Tensor product decomposition has been widely used to construct low-rank approximations 
of operators and functions \cite{jin2022mionet, wang2022tensor, wang2024solving, vemuri2025functional}. 
In particular, \cite{vemuri2025functional} introduces a class of architectures known as TNNs, 
which take advantage of the expressive efficiency of low-rank tensor formats to approximate 
high-dimensional functions with reduced parameter complexity. 
This makes TNNs especially well-suited for solving high-dimensional PDEs that exhibit low-rank 
structures.
Notably, TNNs have demonstrated significant advantages in capturing key features of high-dimensional 
solutions and in mitigating the curse of dimensionality, which often arises in high-dimensional 
integration and function approximation tasks \cite{wang2022tensor, wang2024solving}.
However, their performance in addressing high-dimensional PDEs with significant high-frequency 
components remains largely unexplored, an issue that will be thoroughly investigated in this work.

\subsection{Frequency Principle of Tensor Neural Networks}
\label{fp_tnn}

The F-Principle 
\cite{xu2018understanding, xu2019frequency, zhang2019explicitizing, xu2019training}, 
also known as spectral bias \cite{rahaman2019spectral},
has been extensively studied in classical neural networks for solving PDEs with high-frequency components. 
Since TNNs share many
architectural similarities with standard feedforward networks, they also exhibit a similar tendency to prioritize low-frequency components when approximating multi-scale solutions. To investigate this phenomenon, we introduce a toy model that illustrates how this frequency bias manifests in TNNs and why it hampers their ability to accurately capture high-frequency features. In this subsection, a theoretical explanation of this behavior is provided.

\begin{figure}[H]
  \centering
  \begin{subfigure}[b]{0.35\textwidth}
    \includegraphics[width=\textwidth]{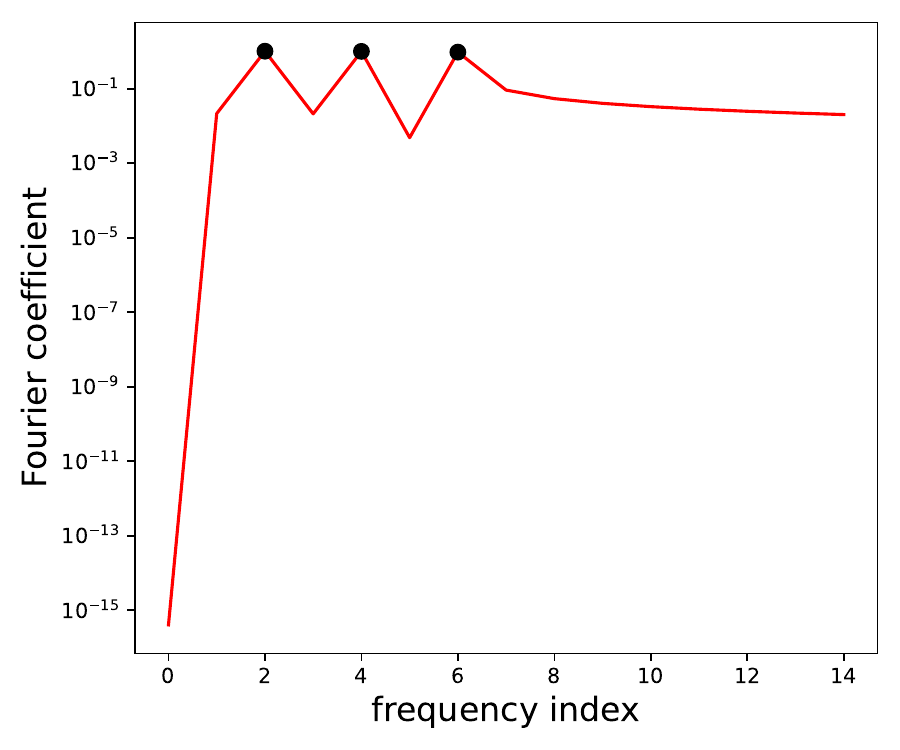}
    \caption{Frequency function $|\hat{f}(k_x, 0)|$.}
    \label{fig:spectral_bias_tnn_a}
  \end{subfigure}
   \hspace{1cm}
  \begin{subfigure}[b]{0.35\textwidth}
    \includegraphics[width=\textwidth]{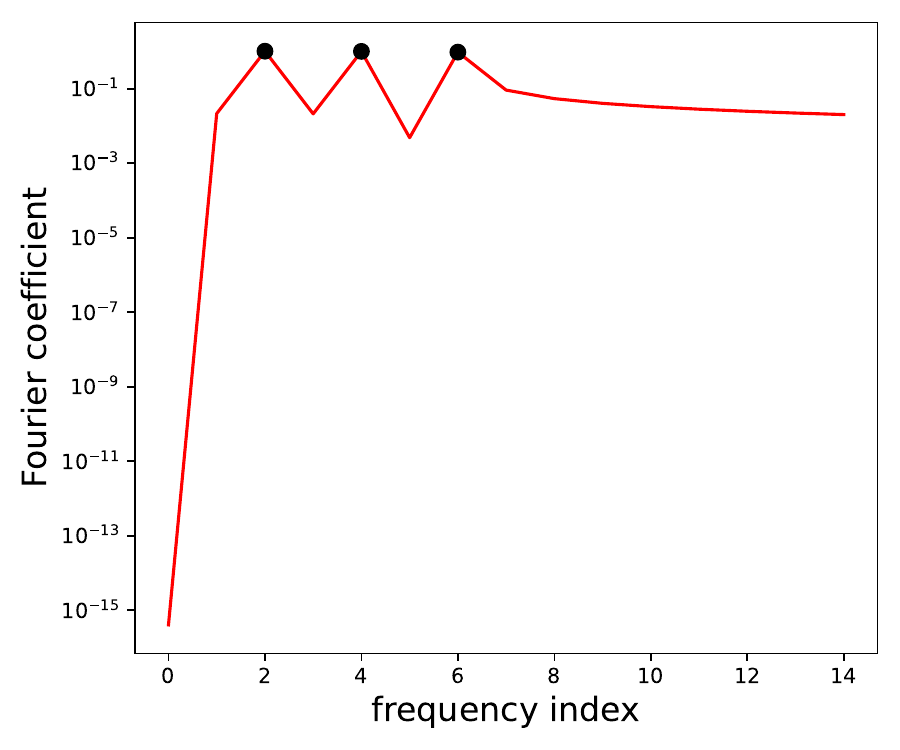}
    \caption{Frequency function $|\hat{f}(0, k_y)|$.}
    \label{fig:spectral_bias_tnn_b}
  \end{subfigure}
  \\
  \hspace{0.65cm}  % 或者尝试 \hspace{1cm}
  \begin{subfigure}[b]{0.35\textwidth}
      \includegraphics[width=\textwidth]{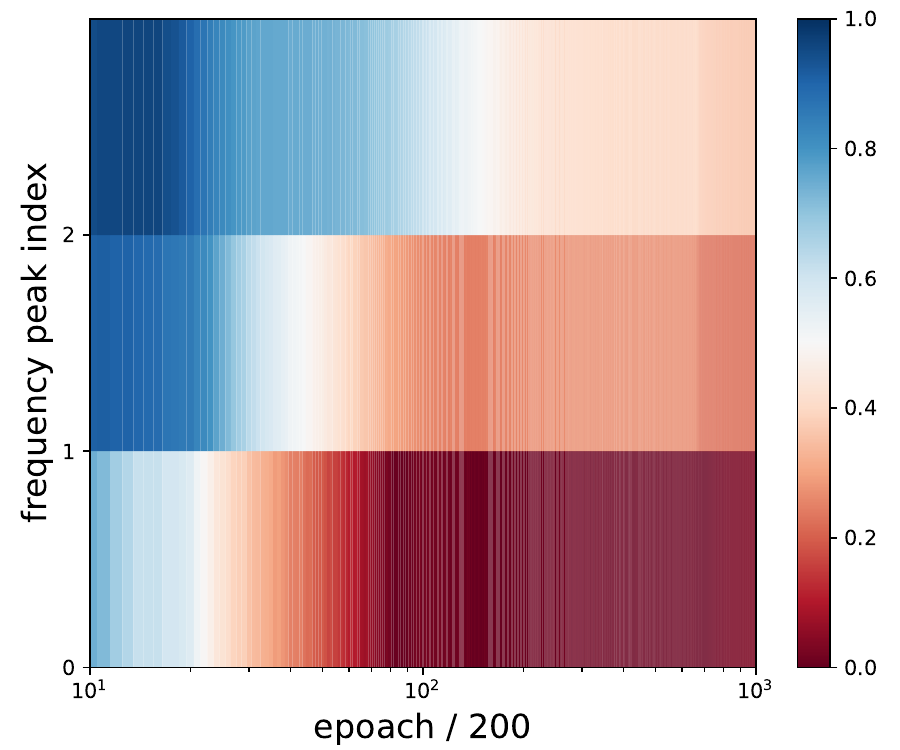}
      \caption{Convergence of TNNs for three frequencies in the $x$ direction.}
      \label{fig:spectral_bias_tnn_c}
  \end{subfigure}
     \hspace{1cm}
  \begin{subfigure}[b]{0.35\textwidth}
      \includegraphics[width=\textwidth]{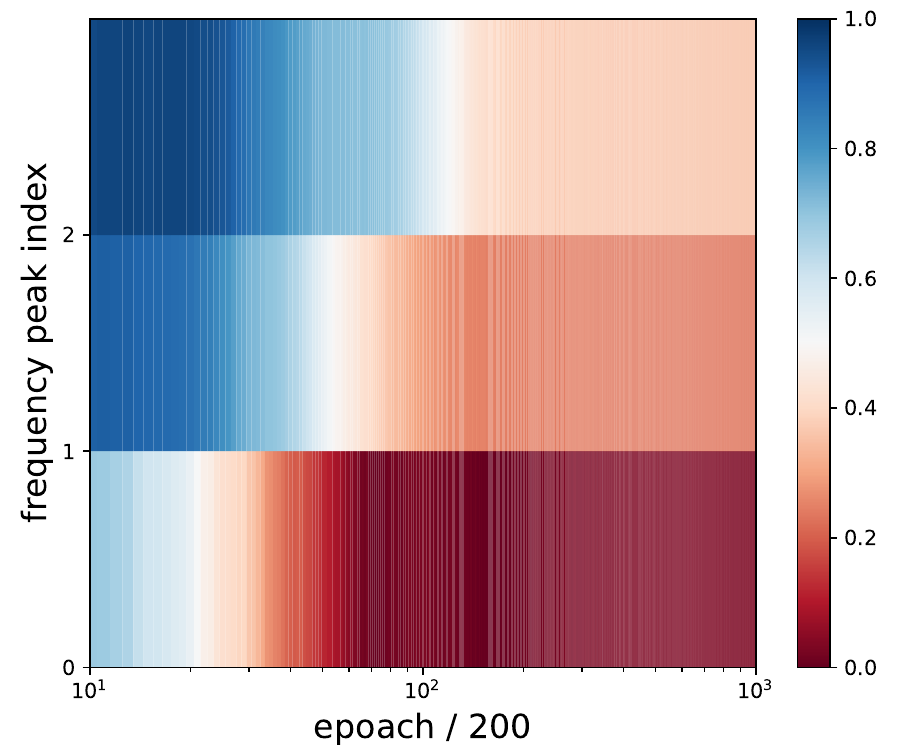}
      \caption{Convergence of TNNs for three frequencies in the $y$ direction.}
      \label{fig:spectral_bias_tnn_d}
  \end{subfigure}
  \caption{\enspace The frequency features of 
  the fitting function (\ref{2d_fit_example})
  and the convergence of TNNs to their frequency features.}
  \label{fig:spectral_bias_tnn}
\end{figure}

We consider the problem of
fitting the following two-dimensional function:
\begin{equation}
  \label{2d_fit_example}
  f(x, y) = \sum_{i=1}^3 \sin(k_i x) + \sin(k_i y),
\end{equation}
where $x,\,y\in [0,\,2\pi]$, $k_1 = 2, \ k_2 = 4, $ and $ k_3 = 6$ represent three 
distinct frequency components in both the $x$- and $y$- directions. 
The Fourier coefficients of $f(x, y)$ are illustrated in Figures~\ref{fig:spectral_bias_tnn_a} and
\ref{fig:spectral_bias_tnn_b}, with the peaks marked at the corresponding frequencies $k_1 = 2, \ k_2 = 4, $ and $ k_3 = 6$. For the case $d=2$, CP-PINNs and TT-PINNs 
yield functionally equivalent representations. Therefore, we adopt a shallow neural
network with architecture [1,100], using a separation rank $r=100$ 
in the CP format. The corresponding TT format uses TT ranks 
$r_0 = r_2 = 1, \ r_1 = 100$, yielding an 
equivalent structure. The activation function is chosen 
as $\tanh(x)$, and training is performed using an initial learning rate of 0.001, decayed exponentially by 
a factor of 0.98 every 1,000 steps.

The evolution of the fitting errors for the three
frequency peaks (low, medium, and high) along both spatial dimensions is shown in Figures~\ref{fig:spectral_bias_tnn_c} and \ref{fig:spectral_bias_tnn_d}.
These results reveal a distinct frequency-dependent convergence pattern: as training progresses, 
lower-frequency components converge first (visualized by the color transition from blue to red), 
followed sequentially by medium-frequency and high-frequency components. This empirical finding directly 
supports the F-Principle, which states that neural networks preferentially learn low-frequency features 
before higher-frequency ones.
Building on these observations and following \cite{xu2018understanding}, we establish
a Fourier-domain theoretical framework to investigate the training dynamics of TNNs. For analytical 
tractability, we focus on a single hidden layer TNNs with $\tanh$ activation function in a two-dimensional 
setting, defined as:
\begin{equation*}
  \Upsilon(x, y) = \sum_{j=1}^r a_j \tanh(w_{x, j}x+b_{x,j}) \cdot \tanh(w_{y, j}y+b_{y,j}).
\end{equation*}

The Fourier transform of $\Upsilon(x, y)$ is expressed as:
\begin{equation*}
  \mathcal{F}[\Upsilon](k_x, k_y) = \sum_{j=1}^r a_j \frac{\pi}{2}\frac{1}{|w_{x,j}w_{y,j}|}\mathrm{e}^{\mathrm{i} \left(\frac{k_xb_{x,j}}{w_{x,j}} + \frac{k_yb_{y,j}}{w_{y,j}}\right)}\left[|w_{x,j}|\delta(k_x) + \frac{\mathrm{i}}{\sinh(\frac{\pi k_x}{2w_{x,j}})}\right]\left[|w_{y,j}|\delta(k_y) + \frac{\mathrm{i}}{\sinh(\frac{\pi k_y}{2w_{y,j}})}\right].
\end{equation*}

We define the spectral error between the TNNs output and target function $f(x)$ as:
\begin{equation*}
  \begin{split}
    D(k_x, k_y) & \triangleq \mathcal{F}[\Upsilon](k_x, k_y) - \mathcal{F}[f](k_x, k_y).% \\
   % & = \sum_{j=1}^N a_j \left\{ \frac{\pi}{2}\frac{1}{|w_{x,j}w_{y,j}|}e^{\mathrm{i} \left[\frac{k_xb_{x,j}}{w_{x,j}} + \frac{k_yb_{y,j}}{w_{y,j}}\right]}\left[\delta(k_x) + \frac{\mathrm{i}}{\sinh(\frac{\pi k_x}{2w_{x,j}})}\right]\left[\delta(k_y) + \frac{\mathrm{i}}{\sinh(\frac{\pi k_y}{2w_{y,j}})}\right]\right\} - \mathcal{F}[f](k).
  \end{split}
\end{equation*}
Applying Parseval's theorem, the loss function is equivalent in both the spectral and spatial domains:
\begin{equation*}
  Loss :=\int_{\mathbb{R}^2}L(k_x, k_y)\mathrm{d}k_x \mathrm{d}k_y = \int_{\mathbb{R}^2} \left(\Upsilon(x, y)-f(x, y)\right)^2 \mathrm{d}x \mathrm{d}y,
\end{equation*}
where $L(k_x, k_y)=|D(k_x, k_y)|^2$ and $|\cdot|$ denotes 
the complex modulus.
To enable gradient-based optimization, we compute gradients of the loss with respect to all network parameters: $\Theta_j \triangleq \{w_{x,j}, w_{y,j}, b_{x,j}, b_{y,j}, a_j\}$ at each frequency point $(k_x, k_y)$ in the spectral domain. 

For the case where $w_{x,j}\neq 0,\ w_{y,j} \neq 0$, we express the spectral error in polar form as
\begin{equation*}
    D(k_x, k_y):= |D(k_x, k_y)|\mathrm{e}^{\mathrm{i}\theta(k_x,\, k_y)},
\end{equation*}
and simplify it as:
\begin{equation*}
  \begin{split}
    D(k_x, k_y)  
         &= \sum_{j=1}^r 2\pi a_j \frac{-1}{|w_{x,j}w_{y,j}|} \frac{1}{\sinh\left(\pi k_x/2w_{x,j}\right)}\frac{1}{\sinh\left(\pi k_y/2w_{y,j}\right)} \mathrm{e}^{\mathrm{i} \left(\frac{k_xb_{x,j}}{w_{x,j}} + \frac{k_yb_{y,j}}{w_{y,j}}\right)} - \mathcal{F}[f](k_x, k_y)\\
         &:=\sum_{j=1}^r \frac{a_j}{k_x k_y}D_j\mathrm{e}^{\mathrm{i}\theta_j}- \mathcal{F}[f](k_x, k_y),
  \end{split}
\end{equation*}
where $\theta_j=\frac{k_xb_{x,j}}{w_{x,j}}+\frac{k_yb_{y,j}}{w_{y,j}}$ and $D_j:=D_j(z_x,\,z_y)=-2\pi\frac{|z_x|}{\sinh\left(\pi z_x/2\right)}\frac{|z_y|}{\sinh\left(\pi z_y/2\right)} $ with $(z_x,\,z_y):=\left(\frac{k_x}{w_{x,j}},\frac{k_y}{w_{y,j}}\right)$. If we suppose $k_x/w_{x,j}>0$  and $k_y/w_{y,j}>0$, the gradient of the spectral loss $L(k_x, k_y)$ with respect to $a_j$ is computed as:

\begin{equation}
\label{eq:partailLpartiala}
  \begin{split}
  \frac{\partial L(k_x, k_y)}{\partial a_j} &= \overline{D(k_x, k_y)} \frac{\partial D(k_x, k_y)}{\partial a_j} + D(k_x, k_y)\frac{\partial \overline{D(k_x, k_y)}}{\partial a_j}\\
  &=\frac{|D(k_x, k_y)|D_j}{k_xk_y}\left(\mathrm{e}^{-\mathrm{i}\theta(k_x,\, k_y)}\mathrm{e}^{\mathrm{i}\theta_j}+\mathrm{e}^{\mathrm{i}\theta(k_x,\, k_y)}\mathrm{e}^{-\mathrm{i}\theta_j}\right)\\
  &=\frac{-4\pi}{|w_{x,j}w_{y_j}|}\exp\left(- \left|\frac{\pi k_x}{2w_{x,j}}\right|- \left|\frac{\pi k_y}{2w_{y,j}}\right|\right)
  \frac{1}{1 - \exp\left(-\frac{\pi k_x}{w_{x,j}}\right)}\frac{1}{1 - \exp\left(-\frac{\pi k_y}{w_{y,j}}\right)}{|D(k_x, k_y)|}\cos(\theta_j-\theta(k_x,k_y)).
    \end{split}
\end{equation}

Similar results  hold when $k_x/w_{x,j} < 0$ or $k_y/w_{y,j} < 0$. 
The gradients of the spectral loss with respect to the remaining parameters, 
namely $\frac{\partial L(k_x, k_y)}{\partial w_{x,j}}, \ \frac{\partial L(k_x, k_y)}{\partial w_{y,j}}, \ \frac{\partial L(k_x, k_y)}{\partial b_{x,j}}, $ 
and $ \frac{\partial L(k_x, k_y)}{\partial b_{y,j}}$, can be derived in a similar 
fashion. The corresponding computational details are provided in \ref{cd}. In general, these gradients can be expressed as:
\begin{equation*}
  \label{gradient_tnn_last}
  \left|\frac{\partial L(k_x, k_y)}{\partial \Theta_{jl}}\right| = |D(k_x, k_y)|G_{jl}(\Theta_j,k_x, k_y) H_x\left(\left|\frac{ k_x}{w_{x,j}}\right|\right) H_y\left(\left|\frac{ k_y}{w_{y,j}}\right|\right)\cos(\theta_j-\theta(k_x,k_y)), 
\end{equation*}
where $H_x(z)$ and $H_y(z)$ decay exponentially to zero as the variable $z\rightarrow\infty$, and $0<\underline{\lambda}\leq|G_{jl}(\Theta_j,k_x, k_y)|\leq \overline{\lambda}$. 

The gradient of the total loss $Loss$ with respect to $\Theta_{jl}$ can be expressed as follows:
\begin{equation*}
  \frac{\partial Loss}{\partial \Theta_{jl}}  =\int_{\mathbb{R}^2} \frac{\partial L(k_x, k_y)}{\partial \Theta_{jl}} \mathrm{d}k_x \mathrm{d}k_y \approx
  \sum_{k_x, k_y\in \mathbb{Z}}  \frac{\partial L(k_x, k_y)}{\partial \Theta_{jl}}.
\end{equation*}
To evaluate the contribution of the frequency component $(k_x,\,k_y)$ to the gradient of the total loss, we first present the following lemma.

\begin{lemma}
\label{lem1}
For any bounded positive constants $C_0,\,C_1$, and $C_2\in \mathbb{R}/\{0\},\, C_3\in \mathbb{R}$, it holds that:
\begin{equation*}
  \lim_{\delta\to 0^+}
\frac{
  \mu\left(\left\{\,w\in[-\delta,\delta]:\;\left|\exp(\frac{C_1}{|w|})\cos\left(\frac{C_2}{w} + C_3\right)\right| \geq C_0\right\}\right)
}{2\delta}
=1,
\end{equation*}
where \(\mu(\cdot)\) denotes the Lebesgue measure.
\end{lemma}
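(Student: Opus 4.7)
The plan is to show that the complement of the set in question—the \emph{bad set} $B_\delta:=\{w\in[-\delta,\delta]:\exp(C_1/|w|)|\cos(C_2/w+C_3)|<C_0\}$—has Lebesgue measure $o(\delta)$ as $\delta\to 0^+$. The intuition is that $\exp(C_1/|w|)$ grows super-polynomially as $w\to 0$, so the bad set is forced to concentrate in extremely thin neighborhoods of the zeros of the cosine factor, and the exponential blow-up easily beats the linear vanishing of the cosine at those zeros.

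First, I would rewrite the defining inequality of $B_\delta$ equivalently as $|\cos(C_2/w+C_3)|<C_0\exp(-C_1/|w|)$, and identify the zeros of $\phi(w):=C_2/w+C_3$ modulo $\pi$, namely $w_k=C_2/\big((k+\tfrac12)\pi-C_3\big)$ for $k\in\mathbb{Z}$ with $(k+\tfrac12)\pi\neq C_3$. These zeros accumulate only at $w=0$, and those lying in $[-\delta,\delta]$ correspond to indices with $|k|\gtrsim |C_2|/(\pi\delta)$.

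Second, I would derive a uniform local bound on the measure of $B_\delta$ near each $w_k$. Jordan's inequality gives $|\cos(\phi(w))|\ge (2/\pi)|\phi(w)-(k+\tfrac12)\pi|$ whenever $\phi(w)$ lies within $\pi/2$ of $(k+\tfrac12)\pi$. Since $\phi(w)-\phi(w_k)=-C_2(w-w_k)/(ww_k)$, on the subinterval where $|w|\in[|w_k|/2,\,3|w_k|/2]$ we have $|\phi(w)-\phi(w_k)|\asymp (|C_2|/w_k^2)|w-w_k|$ with absolute constants. Combining these, $B_\delta$ intersected with a neighborhood of $w_k$ is contained in an interval of length at most $C_\star w_k^2\exp(-C_1/|w_k|)$, where $C_\star$ depends only on $C_0$ and $|C_2|$.

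Third, I would sum the local bounds. Using $|w_k|\sim |C_2|/(\pi|k|)$ for large $|k|$ together with monotonicity of $t\mapsto t^2\exp(-C_1/t)$ on $(0,\infty)$,
\begin{equation*}
\mu(B_\delta)\;\le\;\sum_{|k|\ge K(\delta)} 2C_\star w_k^2\exp(-C_1/|w_k|)\;\lesssim\;\sum_{|k|\ge K(\delta)} k^{-2}\exp(-c|k|),
\end{equation*}
where $K(\delta)\sim |C_2|/(\pi\delta)$ and $c>0$ depends only on $C_1$ and $|C_2|$. The geometric tail is dominated by its leading term, so $\mu(B_\delta)=O(\exp(-c'/\delta))$ for some $c'>0$, and dividing by $2\delta$ yields the claimed limit of $1$.

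The main technical obstacle will be making the linearization of $\phi$ near $w_k$ genuinely uniform in $k$: one must verify that the bad-set interval from step two is contained in the neighborhood of $w_k$ where both the Taylor/Jordan estimate and the comparability $|w|\asymp|w_k|$ are valid, i.e., that $C_\star w_k^2\exp(-C_1/|w_k|)\ll |w_k|$. This is immediate once $|w_k|$ is small because the exponential decay crushes the quadratic factor, but a bit of book-keeping is needed so that the implicit constants do not depend on $k$. Once that uniform linearization is secured, the summation is a routine geometric estimate and the conclusion follows from $\exp(-c'/\delta)/\delta\to 0$.
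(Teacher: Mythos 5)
Your proposal is correct and follows essentially the same line of reasoning as the paper's proof: localize near the zeros $w_k=C_2/((k+\tfrac12)\pi-C_3)$ of the cosine factor, use a Jordan-type lower bound ($|\sin x|\ge |x|/2$ in the paper, $|\cos|\ge(2/\pi)|\cdot|$ in yours) together with the exponential blow-up of $\exp(C_1/|w|)$ to show the bad set near each $w_k$ is an interval of length $O(w_k^2\exp(-c/|w_k|))$, and then sum over $k$ to get exponentially small total bad measure. The only cosmetic difference is that the paper decomposes $[0,\delta]$ into the explicit intervals $\bigl[\frac{C_2}{(n+1)\pi},\frac{C_2}{n\pi}\bigr]$ and telescopes, whereas you linearize $\phi(w)=C_2/w+C_3$ around each $w_k$ and bound a geometric tail; the substance and the key estimate are identical.
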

\begin{proof}
We first consider $C_2 > 0, \, C_3=0$, the proof can be completed in a similar manner for other situations. 
For $|w|\in[\frac{C_2}{(n+1)\pi},\,\frac{C_2}{n\pi}]$, we have
  \begin{equation*}
    \label{lemma1_eq1}
    \left|\exp(\frac{C_1}{|w|})\cos\left(\frac{C_2}{w} + C_3\right)\right| \geq \exp\left(\frac{C_1}{C_2}n\pi\right) \left|\cos\left(\frac{C_2}{w}\right)\right|= \tilde{c}_0\exp\left(n\right) \left|\cos\left(\frac{C_2}{w}\right)\right|.
  \end{equation*}
Due to $|\sin(x)|\geq |x|/2$ holding for $x\in[-\pi/2,\pi/2]$, we have
  \begin{equation*}
    \left|\cos\left(\frac{C_2}{w}\right)\right|=\left|\sin\left(\frac{C_2}{w}-n\pi-\frac\pi 2\right)\right|\geq \frac 1 2 \left|\frac{C_2}{w}-n\pi-\frac\pi 2\right|.
  \end{equation*}
Let $\xi_n = \frac{C_0}{\tilde{c}_0}\exp\left(-n\right)$.
For $|w|\in[\frac{C_2}{(n+1)\pi},\,\frac{C_2}{(n+1/2)\pi+2\xi_n}]  $ and $[\frac{C_2}{(n+1/2)\pi-2\xi_n},\,\frac{C_2}{n\pi}]$ with $n$ large enough, it follows that:
  \begin{equation*}
    \left|\cos\left(\frac{C_2}{w}\right)\right|\geq\xi_n,
  \end{equation*}
  and
  \begin{equation*}
    \left|\exp(\frac{C_1}{|w|})\cos\left(\frac{C_2}{w} + C_3\right)\right| \geq C_0.
  \end{equation*}
Therefore, it follows: 
\begin{equation*}
    \begin{split}
  \lim_{\delta\to 0^+} &
\frac{
  \mu\left(\left\{\,w\in[-\delta,\delta]:\;\left|\exp(\frac{C_1}{|w|})\cos\left(\frac{C_2}{w} + C_3\right)\right| \geq C_0\right\}\right)
}{2\delta}\\=&\lim_{n_0\to +\infty}
{n_0}\pi\sum\limits_{n=n_0}^\infty\frac{1}{(n+1/2)\pi+2\xi_n}-\frac{1}{(n+1)\pi}+\frac{1}{n\pi}-\frac{1}{(n+1/2)\pi-2\xi_n}\\=&1-\lim_{n_0\to +\infty}
{n_0}\pi\sum\limits_{n=n_0}^\infty\frac{4\xi_n}{(n+1/2)^2\pi^2-4\xi^2_n}=1,
    \end{split}
\end{equation*}
where $n_0 = \lfloor \frac{C_2}{\delta \pi} \rfloor$. The above limitation completes the proof.
\end{proof}

Theorem \ref{thm1} 
generalizes Theorem 1 of \cite{xu2018understanding}, extending
the analysis from standard DNNs to TNNs, 
and from one-dimensional to two-dimensional 
cases. Moreover, using Lemma~\ref{lem1}, we provide a more concise proof to facilitate the extension of this result to higher-dimensional cases.

\begin{theorem}
  \label{thm1}
  Consider two-dimensional TNNs with one hidden layer using tanh function as the activation function and training 
  parameters are all bounded.
  Assume that two bounded frequency points $(k_x^1, k_y^1)$ and $(k_x^2, k_y^2)$ 
  satisfy $k_x^2 > k_x^1 > 0$, $k_y^2 > k_y^1 > 0$
  and $b_{x,j}, \, b_{y,j} \neq 0$. 
  If there exist $c_1, c_2>0$, such that $|D(k_x^1, k_y^1)|>c_1>0$, 
  $|D(k_x^2, k_y^2)|< c_2$, we have  
  \begin{equation}
  \label{theorem:eq-1}
    \lim_{\delta \to 0^+} \frac{\mu \left( \left\{(\left| w_{x,j} \right|,\,\left| w_{y,j}\right|): \left|\frac{\partial L(k_x^1, k_y^1)}{\partial \Theta_{jl}}\right| \geq \left|\frac{\partial L(k_x^2, k_y^2)}{\partial \Theta_{jl}}\right| \quad \text{for all} \quad j,l \right\} \bigcap (0, \delta]^2\right)}{\delta^2} = 1,
  \end{equation}
  where $\Theta_{j}\triangleq \{w_{x,j}, w_{y,j}, b_{x,j}, b_{y,j}, a_j\}$, $\Theta_{jl}\in \Theta_{j}$, and $\mu(\cdot)$ is the Lebesgue measure of a set.
\end{theorem}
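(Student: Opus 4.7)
My starting point is the unified form of the gradient magnitude stated immediately above the lemma,
$$\Bigl|\tfrac{\partial L(k_x,k_y)}{\partial \Theta_{jl}}\Bigr|=|D(k_x,k_y)|\,G_{jl}(\Theta_j,k_x,k_y)\,H_x(|k_x/w_{x,j}|)\,H_y(|k_y/w_{y,j}|)\,|\cos(\theta_j-\theta(k_x,k_y))|,$$
together with the quantitative inputs $|D(k_x^1,k_y^1)|>c_1$, $|D(k_x^2,k_y^2)|<c_2$, $\underline{\lambda}\le|G_{jl}|\le\overline{\lambda}$, and the trivial bound $|\cos|\le 1$ on the high-frequency cosine. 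First, I would form the ratio of the two gradient magnitudes and absorb all these bounded, non-vanishing factors into a single positive constant $C_0'$, which reduces \eqref{theorem:eq-1} to showing that, on a set of density $1$ in $(0,\delta]^2$,
$$\frac{H_x(|k_x^1/w_{x,j}|)\,H_y(|k_y^1/w_{y,j}|)}{H_x(|k_x^2/w_{x,j}|)\,H_y(|k_y^2/w_{y,j}|)}\,|\cos(\theta_j^1-\theta^1)|\ge C_0',$$
where the superscript $1$ denotes evaluation at $(k_x^1,k_y^1)$.

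Next, using the explicit form of $H_x,H_y$ derived in \ref{cd}, namely a leading exponential $\exp(-\pi|\cdot|/2)$ multiplied by rational factors that remain bounded away from $0$ and $\infty$ as $|w|\to 0^+$, the $H$-ratio factors as $\exp(\pi(k_x^2-k_x^1)/(2|w_{x,j}|)+\pi(k_y^2-k_y^1)/(2|w_{y,j}|))$ times a bounded remainder. Since $k_x^2>k_x^1>0$ and $k_y^2>k_y^1>0$, after merging the remainder into the constant the target inequality becomes
$$\exp(C_1^x/|w_{x,j}|)\,\exp(C_1^y/|w_{y,j}|)\,|\cos(\theta_j^1-\theta^1)|\ge C_0',$$
with positive constants $C_1^x,C_1^y$ determined by the two frequency gaps. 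A small but important observation is that the contribution of neuron $j$ itself to $D(k_x^1,k_y^1)$ decays like $1/\sinh(\pi k_x^1/(2|w_{x,j}|))$ and is therefore exponentially negligible in the small-$w$ regime, so the phase $\theta^1$ is effectively pinned down by the other (fixed) neurons and may be treated as a bounded constant in what follows.

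Then I would apply Lemma~\ref{lem1} slice-wise. Fixing $|w_{y,j}|\in(0,\delta]$, the cosine argument, viewed as a function of $w_{x,j}$ alone, is of the form $k_x^1 b_{x,j}/w_{x,j}+C_3$ with $C_3=k_y^1 b_{y,j}/w_{y,j}-\theta^1$ a bounded real constant, while $\exp(C_1^y/|w_{y,j}|)$ is a positive constant in the slice. With $C_1=C_1^x$, $C_2=k_x^1 b_{x,j}\neq 0$ (using $b_{x,j}\neq 0$), and threshold $C_0'\exp(-C_1^y/|w_{y,j}|)$, Lemma~\ref{lem1} shows that the bad set in the $w_{x,j}$-slice has Lebesgue density tending to $0$ uniformly in the frozen $w_{y,j}$; Fubini then upgrades this to the two-dimensional statement \eqref{theorem:eq-1}. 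The same argument runs verbatim for every choice of $\Theta_{jl}\in\{w_{x,j},w_{y,j},b_{x,j},b_{y,j},a_j\}$ because the five gradients share the common $|D|\cdot H_xH_y\cdot|\cos|$ skeleton and only the bounded prefactor $G_{jl}$ changes, so a finite intersection of full-density sets is still of full density.

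The main obstacle is that the naive approach of applying Lemma~\ref{lem1} independently in each coordinate and multiplying two one-dimensional density statements fails: the cosine argument couples $w_{x,j}$ and $w_{y,j}$ nonlinearly through $k_x^1 b_{x,j}/w_{x,j}+k_y^1 b_{y,j}/w_{y,j}$. The slice-and-Fubini route above circumvents the coupling by exploiting the fact that only a single direction of exponential growth is needed to overpower the cosine, which is precisely the observation that should enable the extension to general $d\ge 3$ announced in the paragraph introducing Theorem~\ref{thm1}.
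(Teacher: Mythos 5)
Your proposal follows the paper's own proof strategy: reduce the gradient-magnitude comparison (using the bounds on $|D|$, the prefactor $G_{jl}$, and the trivial bound on the high-frequency cosine) to an inequality of the form $\exp\!\left(C_1^x/|w_{x,j}|\right)\exp\!\left(C_1^y/|w_{y,j}|\right)\bigl|\cos(\theta_j-\theta(k_x^1,k_y^1))\bigr|\geq C_0'$ and then invoke Lemma~\ref{lem1}. You actually make two steps more explicit than the paper does --- the slice-and-Fubini upgrade from Lemma~\ref{lem1}'s one-dimensional density statement to the two-dimensional claim \eqref{theorem:eq-1}, and the observation that $\theta(k_x^1,k_y^1)$ can be treated as constant in $(w_{x,j},w_{y,j})$ because neuron $j$'s own contribution to $D$ is exponentially negligible in that regime --- both of which the paper passes over silently but needs for the argument to close.
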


\begin{proof}
  First, we prove \eqref{theorem:eq-1} for the case $\Theta_{jl}=a_j$. Let us denote $\hat{H}_x(\Delta k_x)=H_x\left(\left|\frac{ k^2_x}{w_{x,j}}\right|\right)/H_x\left(\left|\frac{ k^1_x}{w_{x,j}}\right|\right)$ with $\Delta k_x=k_{x}^2-k_{x}^1$.
  Due to the exponential decay of $H_x$, we have 
  $$
  \hat{H}_x(\Delta k_x)=C_4\exp\left(\frac{C_1\Delta k_x}{|w_{x,j}|}\right)
  $$
  holding for sufficiently small $|w_{x,j}|$. Similarly, 
  $$
  \hat{H}_y(\Delta k_y)=C_5\exp\left(\frac{C_6\Delta k_y}{|w_{y,j}|}\right).
  $$
According to \eqref{eq:partailLpartiala}, let us define 
\begin{equation*}
    K(k_x^1, k_y^1, k_x^2, k_y^2) = \frac{1-\exp\left(-\frac{\pi k_x^2}{w_{x,j}}\right)}{1-\exp\left(-\frac{\pi k_x^1}{w_{x,j}}\right)}\frac{1-\exp\left(-\frac{\pi k_y^2}{w_{y,j}}\right)}{1-\exp\left(-\frac{\pi k_y^1}{w_{y,j}}\right)}.
\end{equation*}In the case of $k_x^1/w_{x,j}, \, k_y^1/w_{y,j}, \, k_x^2/w_{x,j},$ and $ k_y^2/w_{y,j} > 0$, we have

\begin{equation}
\label{eq:Kkxky}
    0<\underline{\lambda}\leq|K(k^1_x, k^1_y, k^2_x, k^2_y)|\leq \overline{\lambda},
\end{equation}
holding for sufficiently small  $|w_{x,j}|$ and $|w_{y,j}|$. 
Other cases can be treated similarly.

Then, using $|D(k_x^1, k_y^1)|>c_1>0$, $|D(k_x^2, k_y^2)|< c_2 $, and \eqref{eq:Kkxky}, the inequality 
  \begin{equation*}
    \label{eq_tnn2}
    \left|\frac{\partial L(k_x^1, k_y^1)}{\partial a_j}\right| \geq \left|\frac{\partial L(k_x^2, k_y^2)}{\partial a_j}\right|
  \end{equation*} 
  is equivalent to   
  \begin{equation*}
   \exp\left(\frac{C_1\Delta k_x}{|w_{x,j}|}\right)\exp\left(\frac{C_6\Delta k_y}{|w_{y,j}|}\right)\Big|\cos(\frac{k^1_xb_{x,j}}{w_{x,j}}+\frac{k^1_yb_{y,j}}{w_{y,j}}-\theta(k^1_x,\,k_y^1))\Big|\geq C_0.
  \end{equation*} 
  With appropriate modifications, similar results hold for other configurations.
  By applying Lemma~\ref{lem1}, we obtain 
  \begin{equation*}
    \lim_{\delta \to 0^+} \frac{\mu \left( \left\{(\left|w_{x,j}\right|,\,\left|w_{y,j})\right|: \left|\frac{\partial L(k_x^1, k_y^1)}{\partial a_j}\right| \geq \left|\frac{\partial L(k_x^2, k_y^2)}{\partial a_j}\right| \quad \text{for all} \quad j \right\} \bigcap (0, \delta]^2\right)}{\delta^2} = 1.
  \end{equation*}
  The proofs for the remaining parameters follow similarly. This completes the proof of the theorem.
\end{proof}

Theorem~\ref{thm1} indicates that for any two non-converged frequencies, almost all sufficiently small initial weights lead to the prioritization of the lower-frequency component during gradient-based training. This result provides a theoretical explanation for the low-frequency preference observed when fitting the target function \eqref{2d_fit_example}. Furthermore, Theorem~\ref{thm1} can be naturally extended to the $d$-dimensional case, as demonstrated by the following theorem.
\begin{theorem}
  \label{thm2}
  Consider d-dimensional TNNs with one hidden layer using the tanh function as the activation function and the training parameters are bounded.  Assume that bounded frequencies $\bm{k}^1=(k^1_{x_1}, \dots, k^1_{x_d})$, $\bm{k}^2=(k^2_{x_1}, \dots, k^2_{x_d})$ 
  satisfying $k_{x_i}^2 > k_{x_i}^1 > 0$ and 
  parameters $b_{x_i,j}\neq 0$, where $i=1,\dots,d$. If there exist $c_1, c_2>0$ such that $|D(\bm{k}^1)|>c_1>0$, 
  $|D(\bm{k}^2)|< c_2$, we have  
  \begin{equation*}
    \lim_{\delta \to 0^+} \frac{\mu \left( \left\{
    (\left|w_{x_1,j}\right|,\,\ldots,\,\left|w_{x_d,j}\right|): \left|\frac{\partial L(\bm{k}^1)}{\partial \Theta_{jl}}\right| > \left|\frac{\partial L(\bm{k}^2)}{\partial \Theta_{jl}}\right| \quad \text{for all} \quad j,l \right\} \bigcap (0, \delta]^d\right)}{\delta^d} = 1,
  \end{equation*}
  where $\Theta_{j}\triangleq \{a_j,\,w_{x_i,j},\, b_{x_i,j},\,\hbox{with } i=1,\dots,d\}$, 
 and $\mu(\cdot)$ is the Lebesgue measure of a set. 
\end{theorem}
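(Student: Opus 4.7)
\textbf{Proof proposal for Theorem \ref{thm2}.} The plan is to mirror the argument of Theorem \ref{thm1}, upgrading the bivariate structures to $d$-variate ones, and to reduce the required $d$-dimensional measure statement to iterated applications of Lemma \ref{lem1} via Fubini's theorem.

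\emph{Step 1 (Fourier transform and gradient formula in $d$ variables).} Writing the single-hidden-layer TNN as $\Upsilon(x_1,\dots,x_d) = \sum_{j=1}^r a_j \prod_{i=1}^d \tanh(w_{x_i,j}x_i + b_{x_i,j})$, its Fourier transform factorizes across the $d$ one-dimensional tanh transforms. Mimicking the derivation preceding \eqref{eq:partailLpartiala}, I would express the spectral error in polar form and obtain, for $k_{x_i}/w_{x_i,j}>0$, the general form
\begin{equation*}
    \left|\frac{\partial L(\bm{k})}{\partial \Theta_{jl}}\right| = |D(\bm{k})|\, G_{jl}(\Theta_j,\bm{k})\, \prod_{i=1}^d H_{x_i}\!\left(\left|\frac{k_{x_i}}{w_{x_i,j}}\right|\right) \bigl|\cos(\theta_j - \theta(\bm{k}))\bigr|,
\end{equation*}
where $\theta_j = \sum_{i=1}^d k_{x_i} b_{x_i,j}/w_{x_i,j}$, each $H_{x_i}$ decays exponentially at infinity, and $|G_{jl}|$ is bounded from above and (for sufficiently small $|w_{x_i,j}|$) from below by positive constants, exactly as in the two-dimensional case.

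\emph{Step 2 (Reduction to an exponential--cosine inequality).} Mimicking the manipulation around \eqref{eq:Kkxky}, I would form the ratio of the gradients at $\bm{k}^1$ and $\bm{k}^2$ and show that for sufficiently small $|w_{x_i,j}|$ the factor $\prod_i H_{x_i}(|k_{x_i}^2/w_{x_i,j}|)/H_{x_i}(|k_{x_i}^1/w_{x_i,j}|)$ behaves like $\prod_i C^{(i)}\exp\bigl(C_1^{(i)} \Delta k_{x_i}/|w_{x_i,j}|\bigr)$ with $\Delta k_{x_i}=k_{x_i}^2-k_{x_i}^1>0$. Combining this with the hypotheses $|D(\bm{k}^1)|>c_1$, $|D(\bm{k}^2)|<c_2$, and the uniform boundedness of the $d$-fold $K$-ratio, the inequality $|\partial L(\bm{k}^1)/\partial \Theta_{jl}|\geq|\partial L(\bm{k}^2)/\partial \Theta_{jl}|$ reduces to
\begin{equation*}
    \prod_{i=1}^d \exp\!\left(\frac{C_1^{(i)} \Delta k_{x_i}}{|w_{x_i,j}|}\right)\bigl|\cos(\theta_j - \theta(\bm{k}^1))\bigr| \geq C_0,
\end{equation*}
for some positive constants $C_0$ and $C_1^{(i)}$.

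\emph{Step 3 (Fubini reduction to Lemma \ref{lem1}).} The coupling between the variables $w_{x_1,j},\dots,w_{x_d,j}$ enters only through the single argument $\theta_j$ of the cosine. Freezing $w_{x_2,j},\dots,w_{x_d,j}\in(0,\delta]^{d-1}$, the other exponentials contribute a factor bounded below by a positive constant, and the condition in Step 2 takes the form $\exp\bigl(C_1^{(1)}\Delta k_{x_1}/|w_{x_1,j}|\bigr)\,\bigl|\cos(k_{x_1}^1 b_{x_1,j}/w_{x_1,j}+\widetilde C_3)\bigr|\geq \widetilde C_0$ for constants $\widetilde C_3,\widetilde C_0$ depending on the frozen variables. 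Since $b_{x_1,j}\neq 0$, Lemma \ref{lem1} in the $w_{x_1,j}$ variable yields that the good set has one-dimensional measure $\delta(1-o(1))$ as $\delta\to 0^+$, and this $o(1)$ is uniform in $\widetilde C_3,\widetilde C_0$ because Lemma \ref{lem1}'s proof only exploits the period of the cosine. Integrating by Fubini over the remaining $(d-1)$ variables yields the required $\delta^d(1-o(1))$.

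\emph{Step 4 (Remaining parameters and main obstacle).} For $\Theta_{jl}\in\{w_{x_i,j},b_{x_i,j}\}$, the gradient formulas computed along the lines of \ref{cd} admit the same decomposition into an exponentially decaying prefactor, a uniformly bounded $G_{jl}$, and the common cosine $\cos(\theta_j-\theta(\bm{k}))$, so Steps 2--3 apply verbatim. The main obstacle is the joint dependence of $\theta_j$ on all $w_{x_i,j}$: Lemma \ref{lem1} is one-dimensional, and one must absorb this coupling into the constant $C_3$ of the lemma while preserving uniformity in the frozen variables. Verifying that the $o(1)$ term produced by Lemma \ref{lem1} is uniform across the $(d-1)$ frozen coordinates (so that Fubini actually produces a $(1-o(1))$ factor of $\delta^d$) is the technical heart of the extension; the rest is routine bookkeeping of factorizable $d$-fold products.
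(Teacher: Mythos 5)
Your proposal follows essentially the same approach the paper intends: the paper offers no separate proof of Theorem~\ref{thm2}, stating only that ``the proof of this theorem follows a similar strategy to that of Theorem~\ref{thm1},'' and your $d$-variate generalization --- factorizing the gradient into a bounded prefactor, $d$ exponentially decaying one-dimensional factors, and a common cosine, then reducing to Lemma~\ref{lem1} by freezing all but one of the $w_{x_i,j}$ and invoking Fubini --- is the natural fleshing-out of that strategy (indeed, it makes explicit the Fubini/slicing step that the paper already uses implicitly when it applies the one-dimensional Lemma~\ref{lem1} to obtain the two-dimensional measure statement in Theorem~\ref{thm1}). Your identification of the uniformity of the $o(1)$ term in the frozen variables as the key technical point is correct, and it holds because Lemma~\ref{lem1}'s argument is translation-invariant in $C_3$ and monotone in $C_0$.
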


The proof of this theorem follows a similar strategy to 
that of Theorem \ref{thm1}. In the preceding analysis, we have 
provided a theoretical explanation for the low-frequency priority 
phenomenon observed in TNNs when solving high-dimensional problems. 
This behavior arises from the frequency-dependent variation in network 
gradients during training. To address this limitation, the 
next section introduces Fourier feature embeddings as a means 
of enhancing the ability of TNNs in capturing high-frequency 
information in high-dimensional multi-scale problems. 

\begin{remark}
  Theorems 2 and 3 in \cite{xu2018understanding} can also 
  be extended to the TNN framework discussed in this work to 
  further support the F-Principle in high-dimensional 
  settings. However, since the core focus of this paper lies in 
  the development of a frequency-adaptive TNN algorithm, we 
  do not pursue those extensions here.
\end{remark}

\section{Tensor Neural Networks with Fourier Feature}
\label{ff_for_tnn}

For DNNs, various classical techniques have been developed to address the challenges of solving 
multi-scale problems, including the random Fourier feature method \cite{rahimi2007random}, MscaleDNNs 
\cite{liu2020multi, zhang2023correction}. 
Building upon these foundations, we first present the  framework of random Fourier features and 
subsequently generalize its application to TNNs for high-dimensional multi-scale problems. Our 
numerical experiments in this section validate the efficacy of this approach while revealing limitations 
that motivate further development. The following section will introduce targeted enhancements to address these 
current shortcomings.

As established in subsection \ref{fp_tnn}, both DNNs and TNNs exhibit spectral bias, a phenomenon characterized 
by slower convergence rates for high-frequency components during training \cite{rahaman2019spectral}. Recent 
work by \cite{zhong2019reconstructing} demonstrates that simple sinusoidal input transformations can significantly 
enhance a network's ability to learn high-frequency functions. This approach represents a specific instance of 
Fourier feature mapping \cite{tancik2020fourier, rahimi2007random}.
The general formulation of random Fourier feature mapping
$\gamma:\mathbb{R}^d \to \mathbb{R}^{2m}$ is 
given by:
\begin{equation*}
  \gamma[\bm{B}](\bm{x}) = \begin{bmatrix}
    \cos(2\pi \bm{B}\bm{x})\\
    \sin(2\pi \bm{B}\bm{x})
  \end{bmatrix},
  \label{fe2}
\end{equation*}
where the matrix $\bm{B} \in \mathbb{R}^{m\times d}$ 
has entries sampled from Gaussian distribution $\mathfrak{N}(0, \sigma^2)$, with $\sigma > 0$ 
controlling the frequency spectrum. The use of random Fourier 
features renders the Neural Tangent Kernel (NTK) stationary (i.e., shift-invariant), 
effectively endowing it with convolutional properties over the 
input domain \cite{tancik2020fourier}. Moreover, 
this mapping enables explicit control over the NTK's 
bandwidth, thereby improving both the training efficiency 
and generalization ability of the network.

The frequency characteristics of NTK eigenfunctions are determined by the scale parameter $\sigma$ in the 
Gaussian sampling distribution $\mathfrak{N}(0, \sigma^2)$ used for the matrix $\bm{B}$. This introduces an 
important trade-off: 1) larger values of $\sigma$ improve the network's ability to capture high-frequency components, 
helping to mitigate spectral bias; 2) however, excessively large values of $\sigma$ can lead to overfitting and 
degrade performance on functions dominated by low-frequency content. 
To resolve this issue, \cite{wang2021eigenvector} introduced multiple random Fourier feature mappings with 
different 
scales $\sigma_i$, where $i=1, \cdots, \ N_{\sigma}$. This approach provides greater flexibility in capturing a 
wide range of frequencies. The selection of $\sigma_i$ is problem-dependent and 
must be carefully tuned to balance model expressivity and generalization.

To enhance the ability of TNNs to fit high-frequency functions, we incorporate random Fourier feature mappings 
into CP-PINNs or TT-PINNs, as illustrated
in Figure~\ref{rff_tnn}.
For each input variable $x_i$, the corresponding random Fourier feature 
mapping is defined as follows:
\begin{equation}
  \label{ff_eq}
  \gamma[\bm{B}^{(i)}](x_i) = \begin{bmatrix}
    \cos(2\pi \bm{B}^{(i)}x_i)\\
    \sin(2\pi \bm{B}^{(i)}x_i)
  \end{bmatrix},
\end{equation}
where the entries of $\bm{B}^{(i)} \in \mathbb{R}^{m \times 1}, \ i=1,\dots,d$ 
are independently sampled from a Gaussian 
distribution $\mathfrak{N}(0, \sigma_i^2)$, with $\sigma_i > 0$ 
being a user-specified hyperparameter. The transformed variables from equation \eqref{ff_eq} are then used 
as inputs to the subnetworks corresponding to each input dimension.

\begin{figure}[H]
  \centering
  \includegraphics[width=0.6\linewidth]{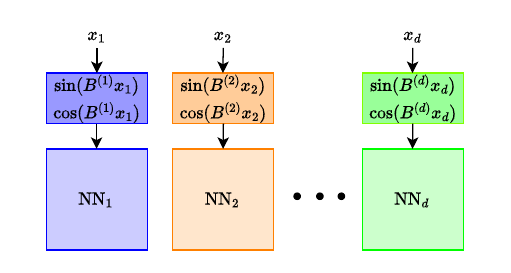}
  \caption{The variables $x_i$ in each dimension are transformed 
  using equation (\ref{ff_eq}) before being fed into the corresponding sub-network. 
  The choice between CP-PINNs and TT-PINNs depends on the 
  distinct output structure and computational processing 
  of each sub-network, which will not be elaborated upon here.}
  \label{rff_tnn}
\end{figure}

Next, we present a simple numerical example to demonstrate that 
the proposed transformation can partially alleviate the challenges 
associated with high-dimensional problems involving high-frequency 
components. Specifically, we consider a 
six-dimensional Poisson equation:
\begin{equation}
  \label{po_eq}
  \begin{split}
    - \Delta u(\bm{x}) &= f(\bm{x}), \quad \bm{x}\in \Omega , \\
    u(\bm{x}) &= g(\bm{x}), \quad \bm{x} \in \partial \Omega,
  \end{split}
\end{equation}
where $\Omega = (0, 1)^6$. 
The functions $f(\bm{x})$ and $g(\bm{x})$ 
are chosen such that the exact solution is
\begin{equation*}
  u_{\text{exact}}(\bm{x}) = \sum_{i=1}^6 \sin(2\pi x_i) + 0.1 \sin(k \pi x_i),
\end{equation*}
where the parameter $k$ controls the high-frequency component of the solution. 
Following the PINN framework \cite{raissi2019physics}, we solve equation (\ref{po_eq}) by minimizing the 
loss function via gradient descent. 
We set the initial learning rate of $0.001$ and 
adopt the activation function
$\varphi(x) = 0.5\sin(x) + 0.5\cos(x)$ \cite{chen2023adaptive}, which has been shown to yield results comparable 
to the sine function while achieving faster convergence \cite{huang2024frequency}.

For comparison, we evaluate four distinct neural network architectures:
\begin{itemize}
  \item CP-PINNs: 
  The architecture is
  illustrated in Figure \ref{tnn_cp}, where each sub-network follows the structure [1, 100, 100, 100].
  
  \item TT-PINNs: As shown in Figure \ref{tnn_tt}, each sub-network is structured as [1, 100, 100, $r_i\times r_{i+1}$] for 
  $i=0, 1, \dots, 5$. The TT ranks are set as 
  $r_0 = 1, \ r_1 = r_2 \dots=r_5=10$, and $r_6 = 1$.

  \item CP-PINNs with Fourier features (CP-PINNs-FF): 
  We set $m=50$ in equation (\ref{ff_eq}), resulting in   
  $\mathbb{B}^{(i)} \in \mathbb{R}^{50\times 1}$ with $\sigma_i=1$ 
  for $i=1,\dots,6$. Each sub-network is configured as [50$\times$ 2, 100, 100]. 
  
  \item TT-PINNs with Fourier features (TT-PINNs-FF): 
  We also set $m=50$ in (\ref{ff_eq}), ensuring that 
  $\mathbb{B}^{(i)} \in \mathbb{R}^{50\times 1}$ with $\sigma_i=1$ 
  for $i=1,\dots,6$. Each sub-network is configured as [50$\times$ 2, 100, $r_i\times r_{i+1}$] for 
  $i=0, 1, \dots, 5$. The TT ranks are set as 
  $r_0 = 1, \ r_1 = r_2 \dots=r_5=10$, and $r_6 = 1$.
\end{itemize}

To ensure a fair comparison between network architectures, CP-PINNs-FF and 
TT-PINNs-FF employ one layer 
less than standard counterparts (CP-PINNs and TT-PINNs) to compensate for the additional parameter overhead 
introduced by the Fourier features. The results for various values of
$k$, which characterize the high-frequency components of the problem, are 
presented in Figure \ref{6d_po_example1}. Due to spectral bias, the relative $L_2$ errors of standard 
TNNs (CP-PINNs and TT-PINNs) remain on the order of $10^{-1}$ for relatively large values of $k$. In contrast, 
incorporating random Fourier features significantly improves the performance of both CP-PINNs and TT-PINNs. In 
this example, CP-PINNs-FF achieves the best overall accuracy among all tested architectures.

This example also highlights that for large values of
frequency parameter $k$, using a fixed
Gaussian distribution $\mathfrak{N}(0, \sigma^2)$ to sample random
frequencies may result in suboptimal Fourier features, thereby limiting 
the accuracy of TNNs to solve multi-scale problems. 
To overcome this limitation, we propose a frequency-adaptive 
method in the following section, which dynamically refines the 
Fourier features to better align with the spectral characteristics 
of the target problem. We further demonstrate that the proposed 
adaptive strategy improves the performance of both CP-based 
and TT-based networks.

\begin{figure}[H]
  \centering
  \includegraphics[width=0.8\linewidth]{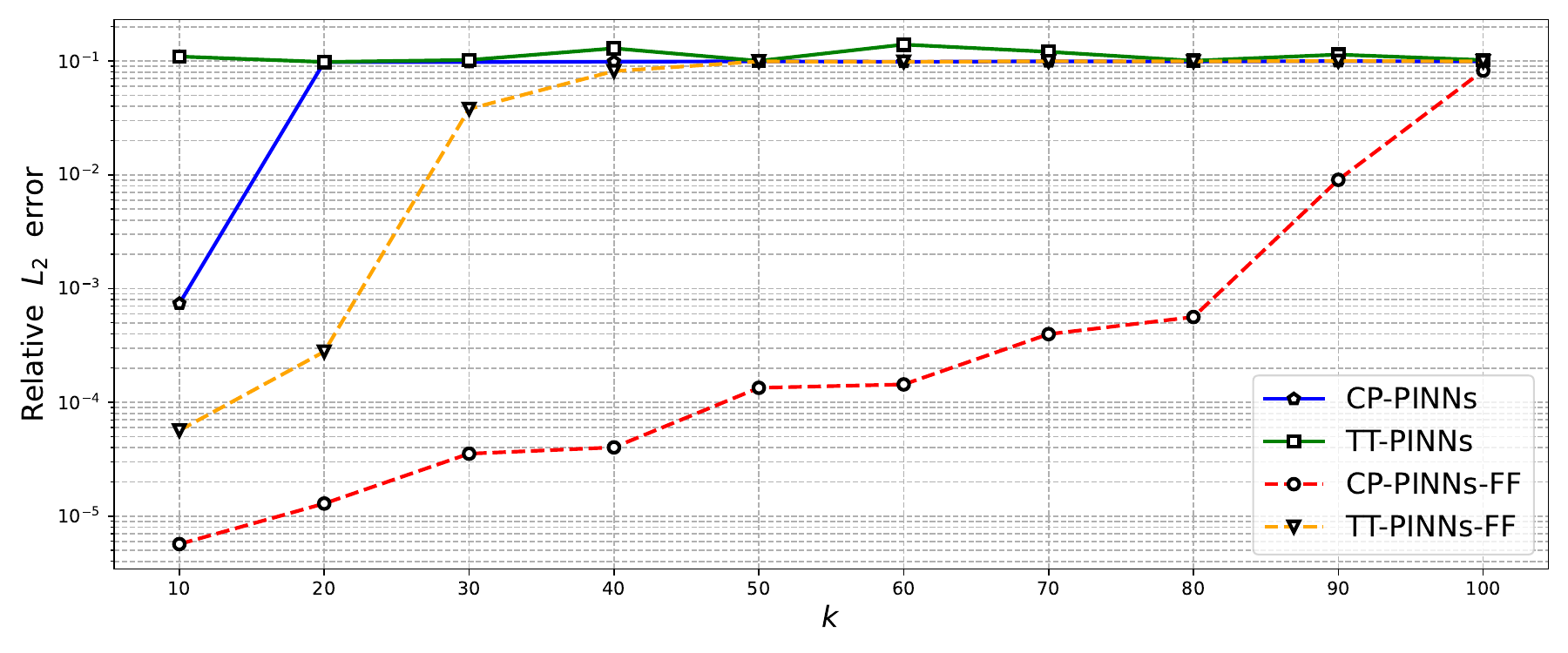}
  \caption{The relative $L_2$ errors for CP-PINNs, TT-PINNs, CP-PINNs-FF, and TT-PINNs-FF with different values of $k$.}
  \label{6d_po_example1}
\end{figure}

\section{Frequency-adaptive Tensor Neural Networks}
\label{fa}

It is straightforward to adopt the frequency-adaptive MscaleDNNs
proposed in \cite{huang2024frequency} to determine the 
frequency features for each input dimension $x_i$ 
within CP-PINNs or TT-PINNs. However, this method relies on the DFT for frequency extraction, 
which becomes
inefficient in high-dimensional settings. Specifically, the
computational cost of the DFT scales as $O(d{\cal N}^d\log {\cal N})$
for a $d$-dimensional problem, where ${\cal N}$ denotes the number of discrete points per dimension. 
To overcome this limitation, we exploit the structural properties of TNNs and propose an efficient frequency 
extraction method that reduces the computational cost from $O(d{\cal N}^d\log {\cal N})$ 
to $O(d{\cal M} {\cal N}\log {\cal N})$, where ${\cal M}$ is a constant independent of $d$. To this end, 
we begin by briefly reviewing the frequency-adaptive MscaleDNNs approach from \cite{huang2024frequency} in 
subsection \ref{fam}, highlighting its limitations due to the reliance on DFT in high-dimensional scenarios. 
We then introduce the proposed frequency-adaptive TNN framework and explain how it effectively addresses 
these challenges.

\subsection{Frequency-adaptive MscaleDNNs}
\label{fam}
Consider a band-limited function $u(\bm{x})$ with 
$\bm{x} \in \mathbb{R}^d$, whose Fourier transform is 
given by: 
\begin{equation*}
  \hat{u}(\bm{k}):=\mathcal{F}[u(\bm{x})](\bm{k}).
\end{equation*} 
Suppose $\hat{u}(\bm{k})$ has a compact support within the domain 
$\mathbb{K}(K_{\text{max}})=\{\bm{k}\in\mathbb{R}^d, \ |\bm{k}|\leq K_{\text{max}}\}$.
Using a down-scaling mappings in phase space, MscaleDNNs \cite{cai2019multi, liu2020multi} decompose $u(\bm{x})$ as: 
\begin{equation*}
  u(\bm{x}) = \sum_{i=1}^N a_i^d u_i(a_i \bm{x}),
\end{equation*}
where $\hat{u}_i(\bm{k}):=\mathcal{F}[u_i(\bm{x})](\bm{k})$ has 
a compact support $\mathbb{K}(\frac{K_{\text{max}}}{a_i})$.
As a result, we can construct networks of the following form
\begin{equation*}
  u_{\text{net}}(\bm{x}; \bm{\theta}) = \sum_{i=1}^N a_i^d u_i(a_i \bm{x}; \bm{\theta}),
\end{equation*}
where each sub-network $u_i(\bm{x};\bm{\theta})$ is designed 
to approximate the corresponding component 
function $u_i(\bm{x})$ at a lower frequency, rather than 
using the full network $u(\bm{x};\bm{\theta})$ to 
directly approximate the original high-frequency 
target function $u(\bm{x})$.

MscaleDNNs utilize a frequency-domain scaling 
strategy that transforms the original input $\bm{x}$ 
into a set of scaled inputs 
$\{a_1 \bm{x}, \ a_2 \bm{x}, \dots, \ a_N \bm{x}\}$. This approach
enables the network to effectively capture multi-scale features 
in target functions $u(\bm{x})$ with rich frequency content. Numerical 
experiments reported in \cite{cai2019multi, liu2020multi} demonstrate 
that MscaleDNNs offer an efficient, mesh-free, and easily implementable 
approach for solving multiscale PDEs. However, their performance
is highly sensitive to the selection of scaling parameters $a_i$. 
To overcome this limitation, frequency-adaptive 
MscaleDNNs \cite{huang2024frequency} 
were proposed. As shown in Algorithm~\ref{alg:FAMscaleDNNs}, these networks dynamically 
optimize the scaling factors
$a_1, \, a_2,\, \dots$ via an adaptive frequency
extraction process, significantly enhancing approximation accuracy. 

\begin{algorithm}[!htpb]
  \caption{Frequency-adaptive MscaleDNNs \cite{huang2024frequency}}
  \label{alg:FAMscaleDNNs}
  \hspace*{0.02in} {\bf Require:}
	Initial feature set $\mathbb{B}_0 = \{2^0, \dots, 2^{N-1}\}$,
	total adaptive steps $I$ and threshold parameter $\rho\in(0,1)$.
  \begin{algorithmic}[1]
    \State \textbf{Initialize:} $u^0_{\text{net}}(\bm{x};\bm{\theta}_0)= \sum_{i=0}^{N-1}2^{i d}\;u^0_i(2^i \bm{x};\bm{\theta}_0)\,.$  
   \For {$ It = 0$ to $I$} 
      \State \textbf{Network Training:} Optimize parameters $\bm{\theta}$ via gradient descent to obtain $u^{It}_{\text{net}}(\bm{x};\bm{\theta}^*_{It})\,.$

      \State \textbf{Frequency Analysis:}
      \State (\romannumeral 1) Compute Fourier coefficients 
        $\hat{u}^{It}_{\bm{k}}$ of $u^{It}_{\text{net}}(\bm{x};\bm{\theta}^*_{It})$ via DFT.
      \State (\romannumeral 2) Extract dominant frequencies: $
          \mathbb{B}_{It+1}
            = \bigl\{\bm{k}_j \mid \bigl|\hat{u}^{It}_{\bm{k}_j}\bigr| > 
              \rho \max_{\bm{k}\in\mathbb{K}} \bigl|\hat{u}^{It}_{\bm{k}}\bigr| 
            \bigr\}. $
     \State {(\romannumeral3) Stopping‐criterion evaluation:
     If $\mathbb{B}_{It} = \mathbb{B}_{It+1}$, terminate the iteration.}

      \State \textbf{Network Adaptation:} $u^{It+1}_{\text{net}}(\bm{x};\bm{\theta}_{It+1})
            = \sum_{j=1}^{J} \hat{u}^{It}_{\bm{k}_j}\;
              u^{It+1}_j\bigl(\bm{k}_j \odot \bm{x};\,\bm{\theta}_{It+1}\bigr)\,. $
    \EndFor
  \end{algorithmic}
    \hspace*{0.02in} {\bf Output:} $u_{\text{net}}^{I}(\bm{x};\bm{\theta}^*_I)$ or $u_{\text{net}}^{It}(\bm{x};\bm{\theta}^*_{It})$ when $\mathbb{B}_{It} = \mathbb{B}_{It+1}$.
\end{algorithm}

The frequency-adaptive MscaleDNNs \cite{huang2024frequency} utilize DFT for frequency extraction. 
While DFT is computationally efficient for low-dimensional problems, it requires uniform mesh 
sampling and incurs a computational cost that scales as $O\left(d{\cal N}^d\log {\cal N}\right)$.
This scaling leads to the curse of dimensionality, making the approach impractical for problems with 
dimensionality greater than four.
This limitation stands in stark contrast to the well-established ability of neural networks to handle 
high-dimensional tasks. To bridge this gap, we propose a novel frequency-adaptive algorithm based on TNNs. 
Our approach preserves computational efficiency while effectively capturing frequency components 
in high-dimensional settings. By leveraging tensor decomposition techniques, the proposed method 
circumvents the dimensional bottlenecks inherent to DFT-based approaches. The details of this 
methodology are presented in the following subsections.

\subsection{Frequency Capture of Tensor Neural Networks}
\label{fc}

In this subsection, we introduce an efficient frequency extraction method for TNNs with 
computational complexity scaling linearly in the dimension $d$, specifically as $O(d{\cal M} {\cal N}\log {\cal N})$. Without loss of
generality, we assume the computational domain $\Omega=[0,1]^d$. 
We begin by defining the {\it  frequency information set } and the {{\it $i$-th dimension frequency 
information set}}.

\begin{definition}
    \label{de1}
    Let $\bm{x}\in [0,1]^d$ and suppose the 
    function $F(\bm{x})$ admits the following Fourier expansion:
    \begin{equation*}
      F(\bm{x}) = \sum_{\bm{k}\in \mathbb{K}} c_{\bm{k}}\mathrm{e}^{2\pi \mathrm{i}\bm{k}\cdot \bm{x}}, \quad c_{\bm{k}}\in \mathbb{C},
    \end{equation*}
    where $\mathbb{K}\subset \mathbb{Z}^d$ is defined as 
    the frequency information set of $F(\bm{x})$, i.e., 
    the set of frequencies with non-negligible spectral 
    coefficients $c_{\bm{k}}$. 
    We define $\mathbb{K}_i$ as the {{\it $i$-th dimension frequency information set}} of 
    $F(\bm{x})$ if and only if
    \begin{equation*}
      k_i' \in \mathbb{K}_i \Leftrightarrow \exists \ (k_1,\dots,\ k_{i-1}, \ k_{i}',\ k_{i+1},\dots,\ k_d) \in \mathbb{K},
    \end{equation*}
    where $k_1, \dots, \ k_{i-1},\ k_{i+1}, \dots, \ k_d \in \mathbb{Z}$.
\end{definition}

\begin{definition}[Cartesian product]
    \label{de2}
    The Cartesian product of two frequency information sets $\mathbb{K}_1$ and $\mathbb{K}_2$, denoted $\mathbb{K}_1\times\mathbb{K}_2$, is defined by:
    $$
    \mathbb{K}_1\times\mathbb{K}_2=\left\{(k_1,\,k_2)\,\big| \,k_1\in \mathbb{K}_1,\, k_2 \in \mathbb{K}_2\right\}.
    $$
\end{definition}

By Definition \ref{de1} and \ref{de2}, it follows that 
\begin{equation}
\label{eq:theorem4.1}
   \mathbb{K} \subset \mathbb{K}_1 \times \cdots \times \mathbb{K}_d.
\end{equation} 
For clarity of exposition, we focus on CP-PINNs in the remainder of this subsection. Specifically, we consider a CP-PINNs with its output defined as: 
\begin{equation}
  \label{fc_eq1}
  F(\bm{x}) := \sum_{\alpha=1}^r f_1(x_1, \alpha)f_2(x_2, \alpha)\cdots f_d(x_d,\alpha).
\end{equation}
All subsequent conclusions can be extended to TT-PINNs in a similar manner.
The following theorem establishes the relationship between the frequency information set
of $F(\bm{x})$ and those of the component functions $f_i(x_i, \alpha)$.

\begin{theorem}
  \label{thm3}
  Let $\bm{x}\in [0,1]^d$ and assume that $F(\bm{x})$ admits a CP decomposition \eqref{fc_eq1}.
  Let $\mathbb{K}$ denote the frequency information set of $F(\bm{x})$, 
  $\mathbb{K}_i$ the frequency information set corresponding to the $i$th-dimension, and $\mathbb{K}_i^{\alpha}$ the frequency information set of $f_i(x_i;\alpha)$, for $i=1,\dots,d$ and 
  $\alpha = 1,\dots,r$. Then the following conclusions hold: 
  \begin{equation*}
    \mathbb{K}_i \subset \bigcup_{\alpha=1}^r \mathbb{K}_i^{\alpha},
  \end{equation*}
  and
  \begin{equation*}
    \mathbb{K} \subset \left(\bigcup_{\alpha=1}^r \mathbb{K}_1^{\alpha}\right)\times \cdots \times \left(\bigcup_{\alpha=1}^r \mathbb{K}_d^{\alpha}\right).
  \end{equation*}
\end{theorem}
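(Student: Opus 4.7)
The plan is to substitute the one-dimensional Fourier expansions of the component functions directly into the CP decomposition \eqref{fc_eq1} and read off the Fourier coefficients of $F(\bm{x})$. Specifically, I would write each factor as
\begin{equation*}
f_i(x_i,\alpha)=\sum_{k\in\mathbb{K}_i^{\alpha}} c^{i,\alpha}_{k}\,\mathrm{e}^{2\pi\mathrm{i} k x_i},\qquad i=1,\dots,d,\ \alpha=1,\dots,r,
\end{equation*}
substitute into \eqref{fc_eq1}, and distribute the finite product of sums to obtain
\begin{equation*}
F(\bm{x}) \;=\; \sum_{\alpha=1}^{r}\sum_{k_1\in\mathbb{K}_1^{\alpha}}\cdots\sum_{k_d\in\mathbb{K}_d^{\alpha}}\left(\prod_{i=1}^{d} c^{i,\alpha}_{k_i}\right)\mathrm{e}^{2\pi\mathrm{i}\bm{k}\cdot\bm{x}}.
\end{equation*}
This is a genuine Fourier series whose coefficient at a multi-index $\bm{k}=(k_1,\dots,k_d)$ is the sum, over those $\alpha$ for which $k_i\in\mathbb{K}_i^{\alpha}$ for every $i$, of the product $\prod_{i=1}^{d}c^{i,\alpha}_{k_i}$.

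Reading off the support of these coefficients then immediately yields
\begin{equation*}
\mathbb{K}\;\subset\; \bigcup_{\alpha=1}^{r}\Bigl(\mathbb{K}_1^{\alpha}\times\cdots\times \mathbb{K}_d^{\alpha}\Bigr)\;\subset\; \left(\bigcup_{\alpha=1}^{r}\mathbb{K}_1^{\alpha}\right)\times\cdots\times\left(\bigcup_{\alpha=1}^{r}\mathbb{K}_d^{\alpha}\right),
\end{equation*}
which is the second containment in the theorem. The first containment then follows by unpacking Definition~\ref{de1}: any $k'\in\mathbb{K}_i$ appears as the $i$-th coordinate of some tuple in $\mathbb{K}$, and by what we have just shown that tuple must lie in $\mathbb{K}_1^{\alpha}\times\cdots\times\mathbb{K}_d^{\alpha}$ for some $\alpha$, so $k'\in\mathbb{K}_i^{\alpha}\subset\bigcup_{\alpha=1}^{r}\mathbb{K}_i^{\alpha}$.

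There is essentially no analytic obstacle: the CP rank $r$ is finite, and each $\mathbb{K}_i^{\alpha}$ collects only frequencies with non-negligible coefficients, so the interchange of sums is harmless. The one subtlety worth flagging, and the reason the statement is a containment rather than an equality, is that contributions from different $\alpha$'s can cancel, so $\mathbb{K}$ may be a strict subset of the right-hand side. This is precisely what makes the frequency extraction strategy in Section~\ref{fa} potentially oversample but never miss the true frequency content. Finally, the same computation extends verbatim to TT-PINNs: expanding the contracted product of one-dimensional cores expresses the scalar output as a finite sum of separable products indexed by the contraction multi-indices, to which the Fourier argument above applies line by line.
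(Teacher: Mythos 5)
Your proof is correct and uses essentially the same idea as the paper: substitute the one-dimensional Fourier expansions of the factors into the CP decomposition and compare Fourier data, with the exchange of finite sums being harmless. The only cosmetic difference is the order of argument: the paper first proves $\mathbb{K}_i \subset \bigcup_{\alpha}\mathbb{K}_i^{\alpha}$ by examining a single nonzero coefficient $c_{\bm{k}}$ and then invokes \eqref{eq:theorem4.1}, whereas you first obtain the (slightly tighter) inclusion $\mathbb{K}\subset\bigcup_{\alpha}\bigl(\mathbb{K}_1^{\alpha}\times\cdots\times\mathbb{K}_d^{\alpha}\bigr)$ from the full expansion and then deduce the one-dimensional containment from it.
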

\begin{proof}
  We prove the case $i=1$.  The argument
  for $i=2,\dots, d$ follows analogously.
  $\forall k_1\in \mathbb{K}_1$, by Definition \ref{de1},
  this implies the existence of a frequency vector 
  $\bm{k} = (k_1, k_2, \dots, k_d)\in \mathbb{K}$ 
  such that the corresponding Fourier coefficient is nonzero:
  \begin{equation*}
    \begin{split}
      c_{\bm{k}}=\int_{[0,1]^d} F(\bm{x}) \mathrm{e}^{-\mathrm{i}(2\pi  \bm{k}\cdot \bm{x})} \mathrm{d}\bm{x} \neq 0.
    \end{split}
  \end{equation*}
Substituting the CP decomposition of $F(\bm{x})$, we have
\begin{equation*}
    \begin{split}
      c_{\bm{k}}= &\int_{[0,1]^d} \sum_{\alpha=1}^r \prod_{i=1}^d f_i(x_i;\alpha) \mathrm{e}^{-\mathrm{i}(2\pi  k_i x_i)} \mathrm{d}x_1 \mathrm{d}x_2\dots\mathrm{d}x_d \\
      = &\sum_{\alpha=1}^r \left(\int_{[0,1]} f_1(x_1;\alpha) \mathrm{e}^{-\mathrm{i}(2\pi  k_1 x_1)} \mathrm{d}x_1 \right) \left( \int_{[0,1]^{d-1}} \prod_{i=2}^d f_i(x_i;\alpha) \mathrm{e}^{- \mathrm{i}(2\pi  k_i x_i)} \mathrm{d}x_2\dots\mathrm{d}x_d \right).
      \end{split}
  \end{equation*}
      
 Since $c_{\bm{k}}\neq 0$, there must exist some $\alpha\in\{1,\dots,r\}$ such that $$   \int_{[0,1]} f_1(x_1;\alpha)\mathrm{e}^{-\mathrm{i}(2\pi k_1 x_1)} \mathrm{d}x_1 \neq 0.$$
  By Definition \ref{de1}, we have that $k_1\in \mathbb{K}_1^{\alpha}$ for some $\alpha$. Therefore, we conclude that 
  \begin{equation*}
    \mathbb{K}_1 \subset \bigcup_{\alpha=1}^r \mathbb{K}_1^{\alpha},
  \end{equation*}
  which, together with equation \eqref{eq:theorem4.1}, completes the proof.
\end{proof} 

According to Theorem \ref{thm3}, an oversampled frequency set of $F(\bm{x})$ can be constructed by taking the Cartesian product of $\bigcup_{\alpha=1}^r \mathbb{K}_i^{\alpha}$ with $i=1,\,\ldots,\, d$, where each $\mathbb{K}_i^{\alpha}$ is obtained by performing DFT on the single variable function $f_i(x_i, \alpha)$. The total computational complexity of this procedure scales as $O(dr {\cal N}\log {\cal N})$, which is linear in dimension $d$. After generating this oversampled set, we identify the most significant frequencies from it and use 
them to adjust the TNN architecture. While this method successfully circumvents the curse of dimensionality, three critical issues require attention:
\begin{itemize}
  \item[1.] The output of CP-PINNs or TT-PINNs, denoted by {$u(\bm{x}; \bm{\theta})$}, typically serves as
  an approximation to the target function $u(\bm{x})$. As a result, the frequency information sets of target function $u(\bm{x})$ and {$u(\bm{x}; \bm{\theta})$} may differ. Similar to the frequency-adaptive MscaleDNNs proposed in \cite{huang2024frequency}, this discrepancy can be mitigated through iterative refinement.

  \item[2] When applying Theorem \ref{thm3} to construct an oversampled frequency set for the target function $u(\bm{x})$, the resulting frequency information set is only approximate. This approximation poses challenges in accurately identifying the true dominant frequencies of $u(\bm{x})$. To illustrate this issue, consider the following 2D example: $$u(\bm{x})=\sin(x_1)\sin(x_2)+\sin(10x_1)\sin(10x_2)+0.1\sin(10x_1)\sin(x_2).$$
  Here, the true dominant frequencies are $(1,1)$ and $(10,10)$. However, a rank-2 CP decomposition of $u(\bm{x})$ yields:
  $$
  u(\bm{x})=\frac {1}{10}\left[10\sin(x_1)+\sin(10x_1)\right]\left[\sin(x_2)+10\sin(10x_2)\right]-10\sin(x_1)\sin(10x_2).
  $$
  In this representation, the oversampled frequency set may prioritize the frequency pair $(1,10)$, which does not belong to $\mathbb{K}$, the true frequency set of $u(\bm{x})$. Consequently, selecting only the highest-ranked frequencies from the oversampled set could result in the exclusion of the actual dominant frequencies of $u(\bm{x})$. Fortunately, increasing the number of selected frequencies mitigates this issue and improves the chances of correctly capturing the true dominant frequencies. For example, if we pick the two most important frequencies for each component, then the true dominant frequencies are (1, 1) and (10, 10) will be included in the dominant frequency set.  
  
  \item[3] Some frequencies selected from the oversampled frequency set may not belong to the true frequency set $\mathbb{K}$ and can thus be regarded as frequency noise. This raises a natural question: does such noise hinder model training? 
  Interestingly, previous work \cite{wang2021eigenvector} suggests that random Fourier features, essentially a form of frequency noise, can actually benefit training. Their theoretical analysis demonstrates that such noise can be advantageous. 
  Motivated by these findings, we select the dominant frequency set from the oversampled frequency set $\bigcup_{\alpha=1}^r \mathbb{K}_1^{\alpha}\times\cdots \times\bigcup_{\alpha=1}^r \mathbb{K}_d^{\alpha}$, which may include noise, rather than the exact frequency (but computationally expensive) set $\mathbb{K}$. This oversampled frequency set preserves the essential frequency content of the target function while introducing stochastic components.
  Consequently, far from degrading performance, this approach is expected to improve it. Numerical experiments in section~\ref{ne} further substantiates this claim. 

\end{itemize}

\subsection{Frequency-adaptive Tensor Neural Networks}
\label{fa_tnn}
We now present frequency-adaptive TNNs for solving the following high-dimension PDE:
\begin{equation}
  \label{pde}
  \begin{split}
    \mathscr{N}(\bm{x};u(\bm{x}))& = f(\bm{x}), \quad \bm{x}\in \Omega, \\
    \mathscr{B}(\bm{x};u(\bm{x}))& = g(\bm{x}), \quad \bm{x}\in \partial \Omega,
  \end{split}
\end{equation}
where $\Omega=(0,1)^d$,  $\mathscr{N}$ denotes 
a linear or non-linear differential operator, 
$\mathscr{B}$ is the boundary operator, and $u(\bm{x})$ 
is the unknown solution. Following the PINN framework, we reformulate equation~\eqref{pde} as the following soft-constrained optimization problem:
\begin{equation}
  \label{loss1}
  \min\limits_{\bm{\theta}\in \Theta} \mathcal{L}(\bm{\theta}) = \min\limits_{\bm{\theta}\in\Theta}\mathcal{L}_r(\bm{\theta}) + \lambda \mathcal{L}_b(\bm{\theta}),
\end{equation}
where 
\begin{equation*}
  \mathcal{L}_r(\bm{\theta})= \sum_{i=1}^{N_r}\left|\mathscr{N}(\bm{x}^i_r;u_{\text{net}}(\bm{x}^i_r;\bm{\theta}))-f(\bm{x}^i_r)\right|^2 \quad \text{and} \quad \mathcal{L}_b(\bm{\theta})= \sum_{i=1}^{N_b}\left|\mathscr{B}(\bm{x}^i_b;u_{\text{net}}(\bm{x}^i_b;\bm{\theta}))-g(\bm{x}^i_b)\right|^2.
\end{equation*}
Here, $u_{\text{net}}(\bm{x};\bm{\theta})$ represents the output of TNNs, $\Theta$ denotes the parameter space, and $\lambda$ 
is a weighting factor that balances 
the PDE loss $\mathcal{L}_r(\bm{\theta})$ and the boundary 
loss $\mathcal{L}_b(\bm{\theta})$. In equation (\ref{loss1}), 
the sets $\{\bm{x}_r^i\}_{i=1}^{N_r}$ and 
$\{\bm{x}_{b}^i\}_{i=1}^{N_{b}}$ 
represent the sample points within the domain $\Omega$ 
and its boundary $\partial \Omega$, respectively. 
The optimal parameters $\bm{\theta}^*$ are typically obtained by 
minimizing the loss function~(\ref{loss1}) using a stochastic 
gradient-based optimization algorithm. 
For clarity, we adopt the CP decomposition format in our 
symbolic representation, as it offers a more intuitive 
understanding of the proposed approach. The TT
decomposition follows an analogous procedure and can be implemented 
in a similar fashion; hence, we omit a detailed discussion.

Inspired by \cite{huang2024frequency}, we initialize the tensor-based network $u^0_{\text{net}}(\bm{x};\bm{\theta})$ as illustrated in Figure~\ref{rff_tnn}, where the Fourier feature coefficients $\bm{B}^{(i)}$
for each dimension $x_i$ are sampled from the Gaussian distribution $\mathfrak{N}(0, \sigma_i^2)$. Let $\mathbb{B}_{i,0}$ denote the set of all elements in $\bm{B}^{(i)}$, and define the initial frequency information set as $$\mathbb{B}_0=\mathbb{B}_{1,\,0}\times\cdots\times\mathbb{B}_{d,\,0}.  $$
After training the network for $T_0$ steps, 
we obtain a preliminary solution  of the form
\begin{equation*}
  \label{ini_eq}
  u^0_{\text{net}}(\bm{x},\bm{\theta}_0^*) = \sum_{\alpha=1}^r u^0_{1,\alpha}(x_1; \bm{\theta}_0^*)u^0_{2,\alpha}(x_2; \bm{\theta}_0^*)\cdots u^0_{d,\alpha}(x_d; \bm{\theta}_0^*).
\end{equation*}
We then perform DFT on each component $u^0_{i, \alpha}(x_i; \bm{\theta}_0^*)$ using $\mathcal{N}$ uniformly spaced mesh points in the interval $[0,1]$, yielding Fourier coefficients $\hat{u}^0_{i, \alpha, k}$. From these, we select the top
$M$ Fourier coefficients with the largest magnitudes, and denote their corresponding Fourier frequencies as $k_{i,\,\alpha,\,1},\,\ldots,\,k_{i,\,\alpha,\,M}$. The parameter $M$ is chosen in advance to suppress the influence of high-frequency noise and is typically selected from the range of [10, 50]. We define the selected frequency set for each mode as
\begin{equation}
  \label{num_get}
  \mathbb{B}_{i,\,\alpha,\,1} = \{k_{i,\,\alpha,\,1}, \dots, k_{i,\,\alpha,\,M}\}.
\end{equation} 
The aggregated frequency set for the $i$th dimension is then given by
\begin{equation*}
  \label{num_get-1}
 \mathbb{B}_{i,1} = \bigcup_{\alpha=1}^r \mathbb{B}_{i,\,\alpha,\,1}.
\end{equation*} 
Consequently, the refined frequency information set for the tensor-based network becomes
$$\mathbb{B}_1=\mathbb{B}_{1,1}\times\cdots\times \mathbb{B}_{d,1}.$$

Next, we reconstruct TNNs using the refined frequency information set $\mathbb{B}_1$.
For each input variable $x_i$, the Fourier feature mapping is adjusted as: 
\begin{equation*}
  \label{ff_new}
  \gamma[\bm{B}^{(i)}_1](x_i) = \begin{bmatrix}
    \cos(\bm{B}^{(i)}_1 x_i)\\
    \sin(\bm{B}^{(i)}_1 x_i)
  \end{bmatrix},
\end{equation*}
where $\bm{B}^{(i)}_1\in\mathbb{R}^{|\mathbb{B}_{i,\,1}|\times1}$ contains elements drawn from the frequency set $\mathbb{B}_{i,\,1}$. Unlike the initial random Fourier feature mapping, this refined mapping incorporates frequency components extracted from the DFT of the preliminary solution $u^0_{\text{net}}(\bm{x};\bm{\theta}_0^*)$. As a result, it is better aligned with the dominant frequencies of the target function. This frequency adaptation enhances 
the expressiveness of TNNs and enables a more accurate approximation of the multi-scale solution to the given problem~\eqref{pde}.
Following an additional $T_0$ training steps, we obtain the 
updated network approximation $u^1_{\text{net}}(\bm{x};\bm{\theta}_1^*)$. 
We repeat this procedure iteratively to generate the frequency sets $\mathbb{B}_2,\,\mathbb{B}_3,\,\ldots$ and the corresponding network approximations
$u^2_{\text{net}}(\bm{x};\bm{\theta}_2^*)$, $u^3_{\text{net}}(\bm{x};\bm{\theta}_3^*)$, and so on. The iteration is terminated when one of the following stopping criteria is satisfied: 1) the preset maximum number of iterations $I$ is reached; or 2) the frequency set has converged, i.e.,  $\mathbb{B}_{It}=\mathbb{B}_{It+1}$. 
This iterative strategy is referred to as the frequency-adaptive TNN algorithm, and it is summarized in Algorithm~\ref{al}, using the CP format for illustration, with the TT format handled analogously.

\begin{algorithm}[!htpb]
  \caption{Frequency-adaptive Tensor Neural Networks algorithm (CP)}
  \label{al}
  \hspace*{0.02in} {\bf Require:}
  Initial Gaussian distribution $\mathfrak{N}(0, \sigma_i^2)$ 
  for Fourier feature set $\mathbb{B}_0$, total adaptive steps $I$, and parameter $M$ for selecting dominant frequencies.
  \begin{algorithmic}[1]
    \State \textbf{Initialize:} $  u^0_{\text{net}}(\bm{x},\bm{\theta}_0) = \sum_{\alpha=1}^r u^0_{1,\alpha}(x_1; \bm{\theta}_0)u^0_{2,\alpha}(x_2; \bm{\theta}_0)\cdots u^0_{d,\alpha}(x_d; \bm{\theta}_0)\,.$ 
    
    \For {$ It = 0$ to $I$} 

      \State \textbf{Network Training:} Optimize parameters $\bm{\theta}_0$ via gradient descent to obtain $u^{It}_{\text{net}}(\bm{x};\bm{\theta}_{It}^*)\,.$

      \State \textbf{Frequency Analysis:}
      \State (\romannumeral 1) Perform DFT on each component function $u^{It}_{i,\alpha}(x_i; \bm{\theta}_{It}^*)$, a function of one variable, to compute the 
    Fourier coefficients $\hat{u}^{It}_{i, \alpha, k}$.
      \State (\romannumeral 2) Select the $M$ most important frequencies into the set: $\mathbb{B}_{i,\alpha, It+1}.$

      \State (\romannumeral3) Update frequency sets $\mathbb{B}_{i,It+1} = \bigcup_{\alpha=1}^r \mathbb{B}_{i,\,\alpha,\,It+1}$ and $ \mathbb{B}_{It+1}=\mathbb{B}_{1,It+1}\times\cdots\times \mathbb{B}_{d,It+1}$.

    \State {(\romannumeral4) Stopping‐criterion evaluation:
     If $\mathbb{B}_{It} = \mathbb{B}_{It+1}$, terminate the iteration.}
     
       \State \textbf{Sub-network Adjust:} Using Algorithm~\ref{alg:FAMscaleDNNs} to adjust each sub-network $ u^{It+1}_{i,\alpha}(x_i; \bm{\theta}_{It+1})$.

      \State \textbf{Tensor Product:} $  u^{It+1}_{\text{net}}(\bm{x},\bm{\theta}_{It+1}) = \sum_{\alpha=1}^r u^{It+1}_{1,\alpha}(x_1; \bm{\theta}_{It+1})u^{It+1}_{2,\alpha}(x_2; \bm{\theta}_{It+1})\cdots u^{It+1}_{d,\alpha}(x_d; \bm{\theta}_{It+1})\,.$
    \EndFor
  \end{algorithmic}
    \hspace*{0.02in} {\bf Output:} $u_{\text{net}}^{I}(\bm{x};\bm{\theta}^*_I)$ or $u_{\text{net}}^{It}(\bm{x};\bm{\theta}^*_{It})$ when $\mathbb{B}_{It} = \mathbb{B}_{It+1}$.
\end{algorithm}

We then apply the newly proposed frequency-adaptive TNN algorithm to solve Poisson equation \eqref{po_eq} described in section \ref{ff_for_tnn}, and compare its performance with that of CP-PINNs-FF and TT-PINNs-FF. The evaluation considers high-frequency values of $k=60, 70, 80, 90, 100$. The relative $L_2$ errors for all TNNs are shown in
Figure \ref{6d_po_example2}, which illustrates that the frequency-adaptive TNN algorithm achieves a significant reduction in $L_2$ errors
after just one adaptation step.

\begin{figure}[H]
  \centering
  \includegraphics[width=0.8\linewidth]{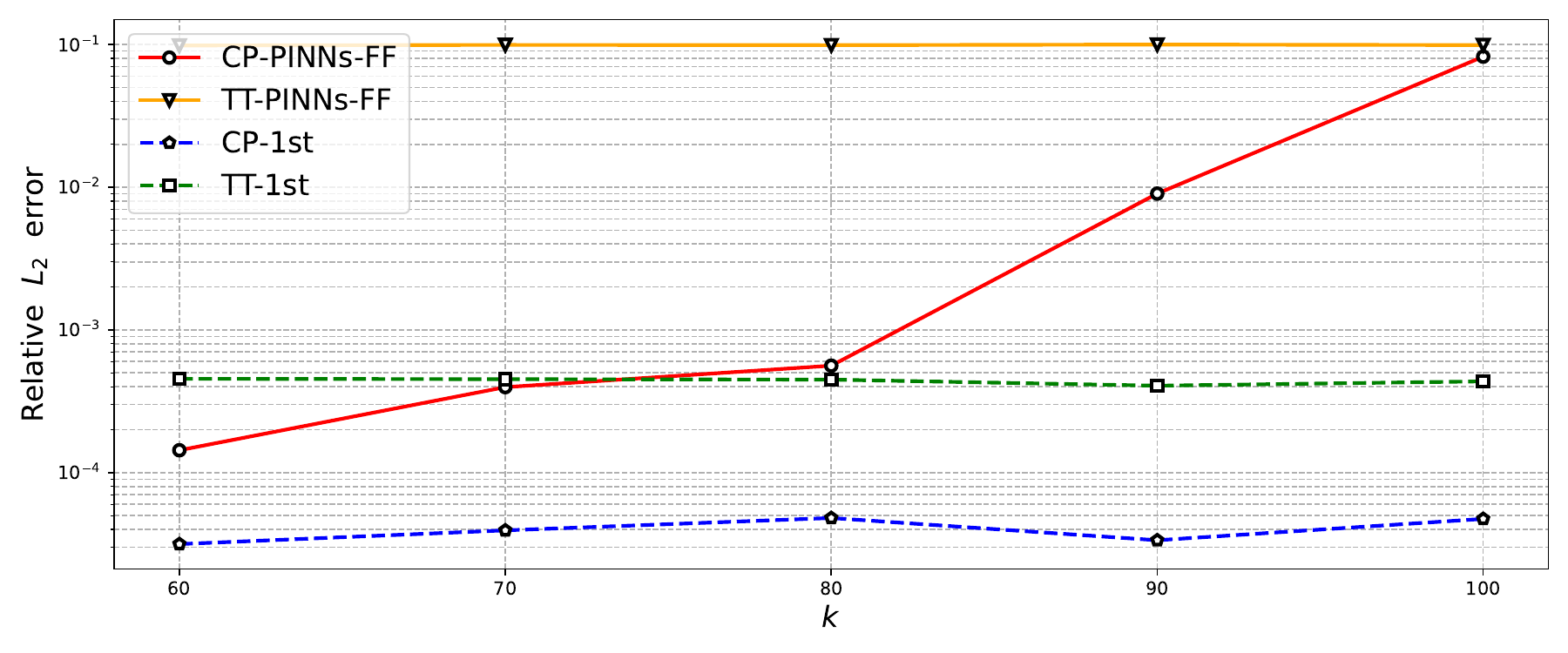}
  \caption{The relative $L_2$ errors for CP-PINNs-FF, TT-PINNs-FF, frequency-adaptive TNN algorithm at $It=1$ based on CP decomposition (denoted as CP-1st) and based on TT decomposition (denoted as TT-1st).}
  \label{6d_po_example2}
\end{figure}

\section{Numerical Examples}
\label{ne}

In this section, we evaluate the performance of the newly proposed
frequency-adaptive TNNs in solving high-dimensional
PDEs with multi-scale features. 
The network weights are initialized using the 
Glorot normal initialization scheme \cite{glorot2010understanding}, 
and all models are trained using the 
Adam optimizer \cite{kingma2014adam} with default settings. Unless 
otherwise specified, the activation function employed is 
$\varphi(x)= 0.5\sin(x)+0.5\cos(x)$. 
The relative $L_2$ error used to evaluate accuracy is defined as 
\begin{equation*}
  E_{\text{rel}}(u_{\text{net}},u_{\text{exact}}) = \frac{\|u_{\text{net}} - u_{\text{exact}} \|_{L_2(\Omega)}}{\| u_{\text{exact}} \|_{L_2(\Omega)}} = \frac{\sqrt{\int_{\Omega} |u_{\text{net}}(\bm{x};\bm{\theta}^*) - u_{\text{exact}}(\bm{x})|^2 \, \mathrm{d}\bm{x}}}{\sqrt{\int_{\Omega} |u_{\text{exact}}(\bm{x})|^2 \, \mathrm{d}\bm{x}}}.
\end{equation*}
where $u_{\text{exact}}$ is the exact solution and $u_{\text{net}}$ 
is the output of frequency-adaptive TNNs.       

Unless stated otherwise, all experiments begin with
the initial network configuration $u_{\text{net}}^0(\bm{x};\bm{\theta})$ 
at iteration $It=0$. For the CP networks, we set
$m=50$ in equation (\ref{ff_eq}), such that the frequency
feature vectors $\bm{B}^{(i)} \in \mathbb{R}^{50\times 1}$ 
are sampled from the Gaussian distribution $\mathfrak{N}(0, \sigma_i^2)$ with
$\sigma_i=10$ for all $i=1,\dots,d$. Each 
sub-network corresponding to the input variable $x_i$ 
follows the architecture [100, 100, 100] in the initial network 
$u_{\text{net}}^0(\bm{x};\bm{\theta})$. 
The final output layer follows the structure in equation~\eqref{cp_eq}, with a weight vector
$W\in \mathbb{R}^{100\times 1}$. 
For the TT networks, we adopt the same settings for 
$m=50$ and $\sigma_i=10$, 
and define each sub-network with architecture 
[100, 100, $r_i\times r_{i+1}$] for 
$i=0, 1, \dots, d$. The TT-rank parameters are set as 
$r_0 = 1, \ r_1 = r_2 \dots=r_{d-1}=10$, and $r_d = 1$. 
The learning rate is initialized at 0.001 and decays 
exponentially by a factor of 0.95 every 1,000 training steps.
At each adaptation step $It$, the neural network solution 
$u_{\text{net}}^{It}(\bm{x};\bm{\theta}_{It}^*)$ 
is obtained by training the network for 
$T_0=100,000$ epochs.

\subsection{Poisson Equation}

We first consider Poisson equation \eqref{po_eq} with $d=3$ and 12, where the computational domain is defined as $\Omega=(0,1)^d$.
For the case $d=3$, we choose appropriate functions $f(\bm{x})$, $g(\bm{x})$ 
in equation (\ref{po_eq}) such that the 
exact solution is

\begin{equation*}
  \begin{split}
    u_{\text{exact}}(\bm{x}) =& \sin(2k_1\pi x_1)\sin(2k_1\pi x_2)\sin(2k_2\pi x_3) + \sin(2k_1\pi x_1)\sin(2k_2\pi x_2)\sin(2k_1\pi x_3)\\
    &+ \sin(2k_2\pi x_1)\sin(2k_1\pi x_2)\sin(2k_1\pi x_3),
  \end{split}
\end{equation*}
where $k_1=10$, $k_2=160$ represent two distinct frequency scales. 
Then, we employ the frequency-adaptive TNNs 
with $I=4$ to solve the multi-scale PDE (\ref{po_eq}). For this example, the TT network uses rank parameters $r_0 = 1, \ r_1 = r_2 = 20$, and $ r_3 = 1$. 
All other initial settings follow the experimental configuration described at the beginning of this section.

\begin{figure}[H]
  \centering
  \begin{subfigure}[b]{0.35\textwidth}
    \includegraphics[width=\textwidth]{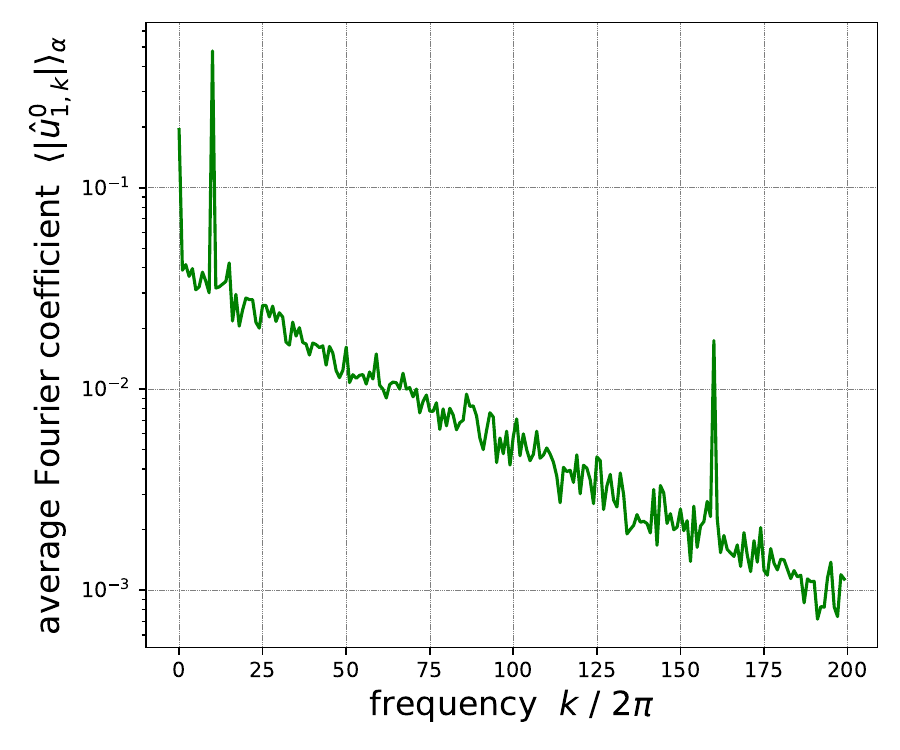}
    \caption{$\langle |\hat{u}^0_{1, k}|\rangle_\alpha$ for CP-PINNs and $It=0$.}
    \label{fig:tnn_3d_po_fre_a}
  \end{subfigure}
  \begin{subfigure}[b]{0.35\textwidth}
    \includegraphics[width=\textwidth]{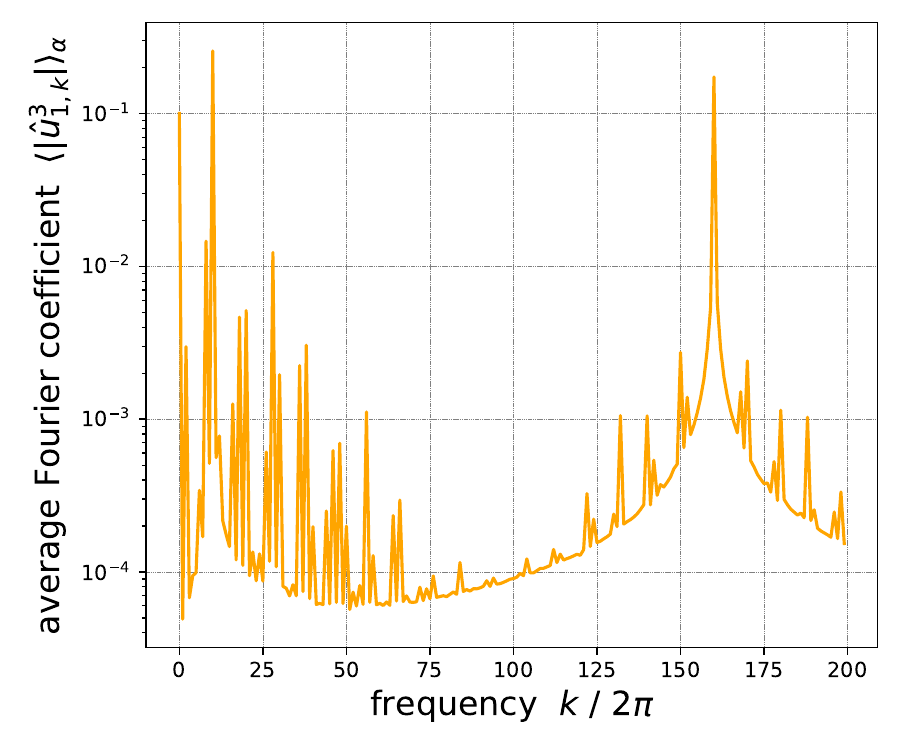}
    \caption{$\langle |\hat{u}^3_{1, k}|\rangle_\alpha$ for CP-PINNs and $It=3$.}
    \label{fig:tnn_3d_po_fre_b}
  \end{subfigure}
  \begin{subfigure}[b]{0.35\textwidth}
      \includegraphics[width=\textwidth]{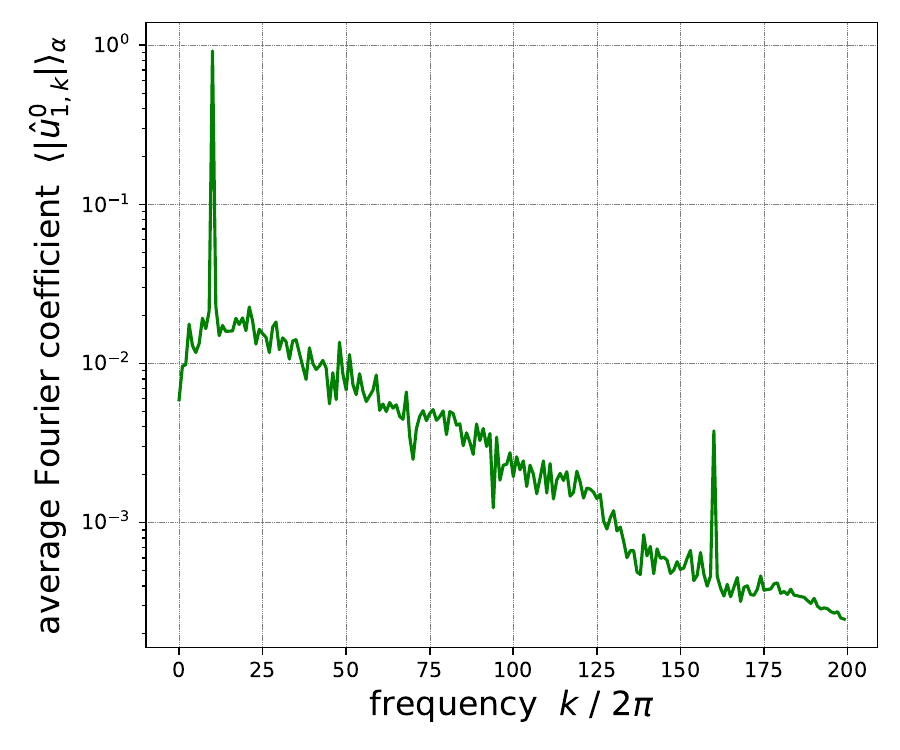}
      \caption{$\langle |\hat{u}^0_{1, k}|\rangle_\alpha$ for TT-PINNs and $It=0$.}
      \label{fig:tnn_3d_po_fre_c}
  \end{subfigure}
  \begin{subfigure}[b]{0.35\textwidth}
      \includegraphics[width=\textwidth]{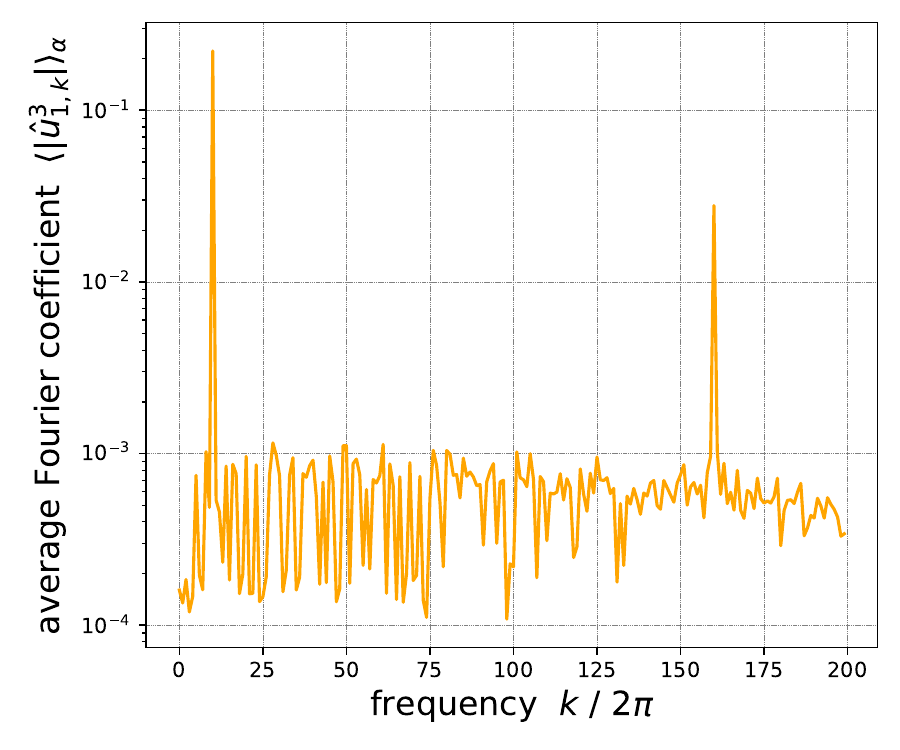}
      \caption{$\langle |\hat{u}^3_{1, k}|\rangle_\alpha$ for TT-PINNs and $It=3$.}
      \label{fig:tnn_3d_po_fre_d}
  \end{subfigure}
  \caption{\enspace  
  The distributions of $\langle| \hat{u}^{It}_{1, k}|\rangle_\alpha$ for Poisson equation (\ref{po_eq}) with $d=3$.}
  \label{fig:tnn_3d_po_fre}
\end{figure}

The relative $L_2$ errors at each adaptive iteration are reported in 
Table \ref{table_po}. At the initial step $It = 0$, the 
relative $L_2$ errors for CP-PINNs and TT-PINNs, 
both employing random Fourier features,
are 9.447e-02 and 7.381e-01, respectively. 
These results indicate that the 
initial approximation $u^0_{\text{net}}(\bm{x};\bm{\theta}_0^*)$ 
lacks high accuracy. For CP-PINNs, we then perform DFT on each component of $u^0_{\text{net}}(\bm{x};\bm{\theta}_0^*)$, to obtain $\hat{u}^0_{i, \alpha, k}$. 
To clearly illustrate the frequency distribution, we define the average frequency amplitude across all components in the 
$x_i$-direction at $It$-th adaptive step as:
\begin{equation*}
  \langle |\hat{u}^{It}_{i,k}|\rangle_\alpha = \frac{1}{r}\sum_{\alpha=1}^r|\hat{u}^{It}_{i, \alpha, k}|.
\end{equation*}
A similar definition applies to TT-PINNs.
Figures \ref{fig:tnn_3d_po_fre_a} and \ref{fig:tnn_3d_po_fre_c} visualize the distribution of $\langle |\hat{u}^{It}_{i,k}|\rangle_\alpha$ for CP-PINNs and TT-PINNs at $It=0$, respectively.
Despite the inaccuracy of $u^0_{\text{net}}(\bm{x};\bm{\theta}_0^*)$, the network effectively identifies key frequency features: the coefficients at $k=10$ and $k = 160$ are markedly larger than neighboring frequencies. 

In this test case, the parameter $M$ in equation 
(\ref{num_get}) is set to $10$, representing the number of selected frequencies. 
Using Algorithm~\ref{al}, we construct the frequency feature set $\mathbb{B}_1$,
which is then used to initialize the updated network 
$u_{\text{net}}^1(\bm{x};\bm{\theta})$. 
After retraining, we obtain the refined approximation 
$u_{\text{net}}^1(\bm{x};\bm{\theta}_1^*)$. 
The incorporation of posterior frequency adaptation 
significantly improves the accuracy, reducing the relative 
$L_2$ error to 6.155e-03 for CP-PINNs and 1.583e-02 for TT-PINNs. 
These results highlight the effectiveness of the frequency-adaptive strategy after just a single iteration.

Due to the presence of frequency noise, the termination condition 
$\mathbb{B}_{It}=\mathbb{B}_{It+1}$ 
is difficult to satisfy exactly. As a result, the frequency 
adaptation process is terminated after a fixed number of 
iterations, specifically at $It=4$. 
The frequency distributions obtained 
from CP-PINNs and TT-PINNs just before the final training step are shown in
Figures \ref{fig:tnn_3d_po_fre_b} and \ref{fig:tnn_3d_po_fre_d}, 
respectively. Compared to those from the initial adaptation, 
the frequency components in the final step are more 
sharply concentrated around $k=10$ and $k=160$, indicating 
enhanced frequency alignment and more effective feature extraction. 
As frequency estimation becomes more accurate over successive iterations, the solution 
accuracy also improves. The final relative $L_2$ 
errors are reduced to 3.850e-03 for CP-PINNs and 1.490e-03 for TT-PINNs. 
These results 
confirm the effectiveness and robustness of the 
proposed frequency-adaptive TNN framework.

\begin{figure}[H]
  \centering
  \begin{subfigure}[b]{0.35\textwidth}
    \includegraphics[width=\textwidth]{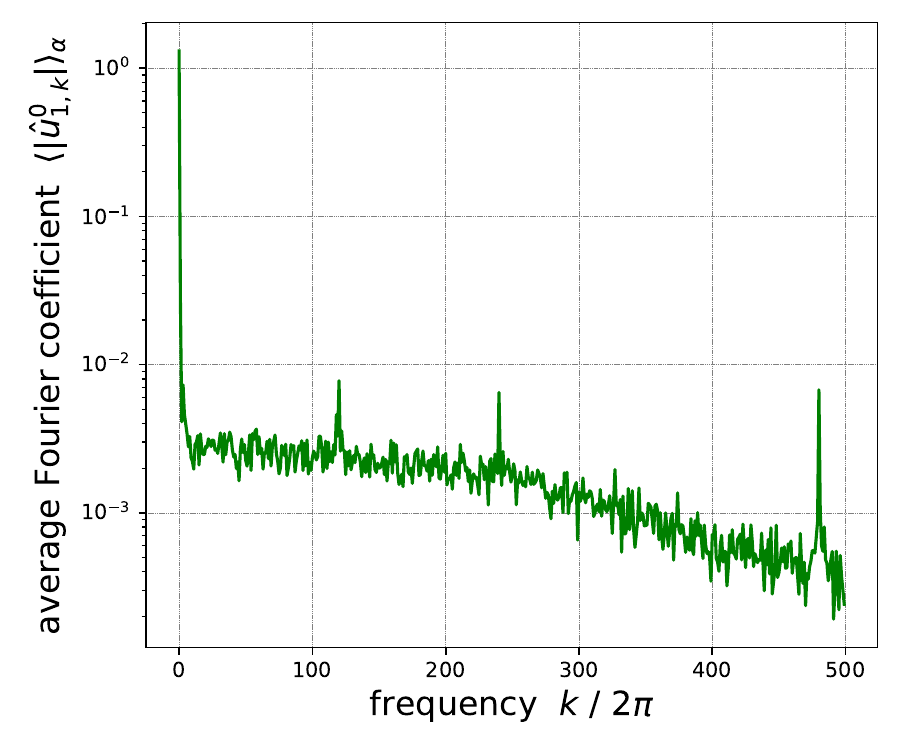}
    \caption{$\langle |\hat{u}^0_{1, k}|\rangle_\alpha$ for CP-PINNs and $It=0$.}
    \label{fig:tnn_12d_po_fre_a}
  \end{subfigure}
  \begin{subfigure}[b]{0.35\textwidth}
    \includegraphics[width=\textwidth]{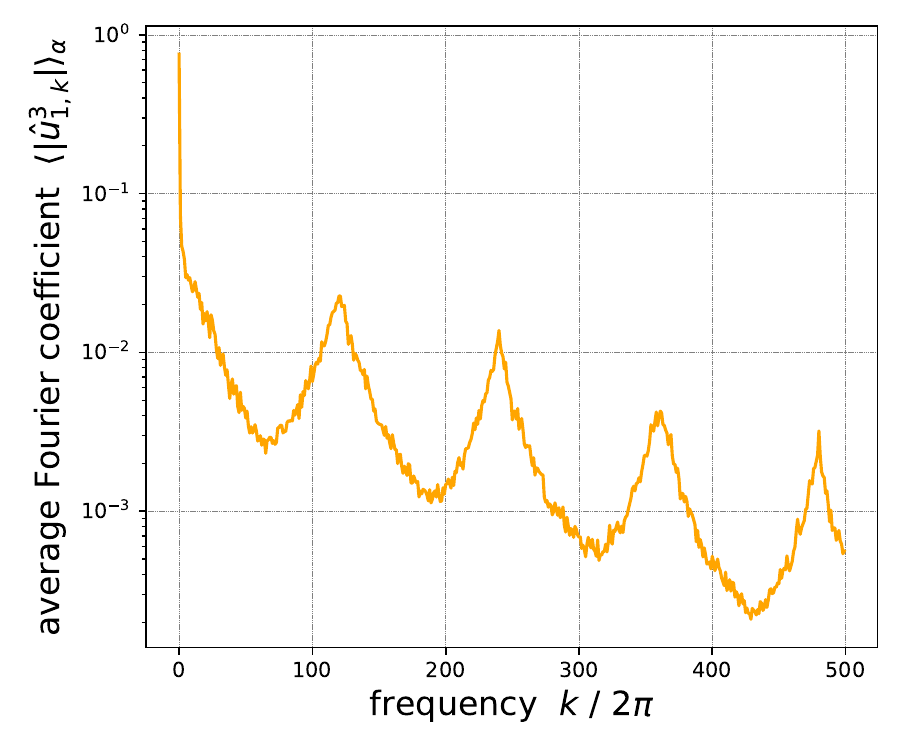}
    \caption{$\langle |\hat{u}^3_{1, k}|\rangle_\alpha$ for CP-PINNs and $It=3$.}
    \label{fig:tnn_12d_po_fre_b}
  \end{subfigure}
  \begin{subfigure}[b]{0.35\textwidth}
      \includegraphics[width=\textwidth]{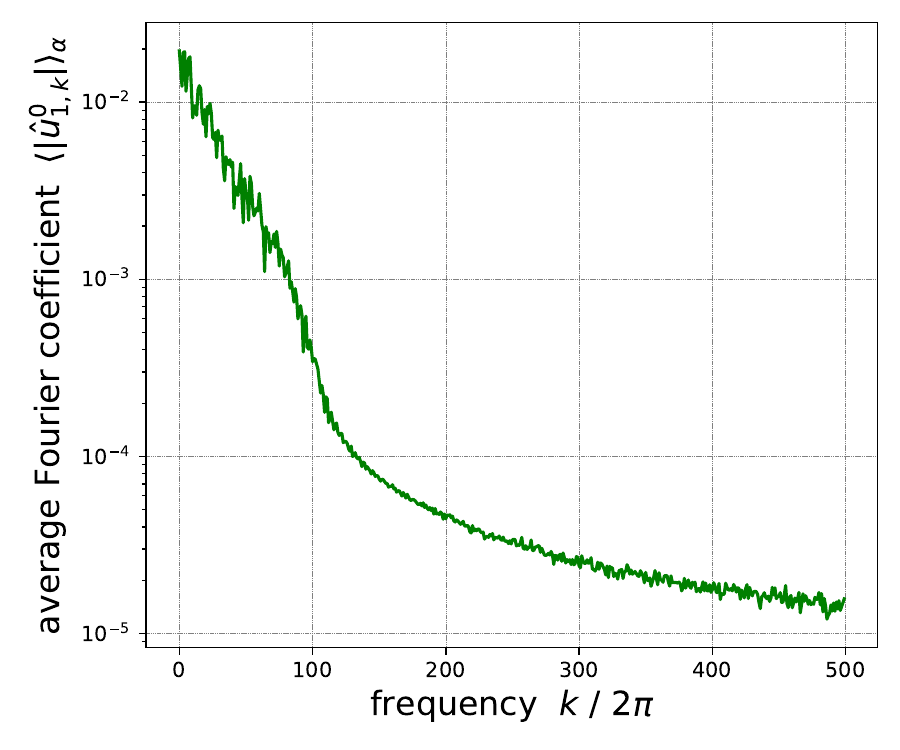}
      \caption{$\langle |\hat{u}^0_{1, k}|\rangle_\alpha$ for TT-PINNs and $It=0$.}
      \label{fig:tnn_12d_po_fre_c}
  \end{subfigure}
  \begin{subfigure}[b]{0.35\textwidth}
      \includegraphics[width=\textwidth]{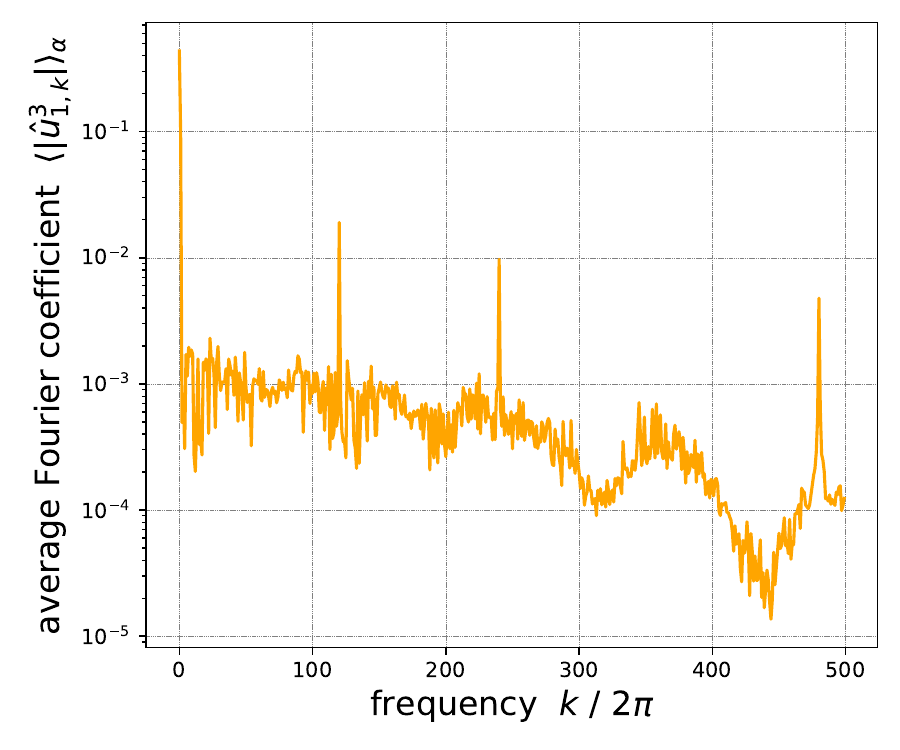}
      \caption{$\langle |\hat{u}^3_{1, k}|\rangle_\alpha$ for TT-PINNs and $It=3$.}
      \label{fig:tnn_12d_po_fre_d}
  \end{subfigure}
  \caption{\enspace The distributions of $\langle |\hat{u}^{It}_{1, k}|\rangle_\alpha$ for Poisson equation (\ref{po_eq}) with $d=12$. }
  \label{fig:tnn_12d_po_fre}
\end{figure}

Next, we consider a more challenging example involving 
higher dimensionality and more intricate frequency features. 
Specifically, we set $d=12$, $\Omega=(0,1)^{12}$ and choose 
appropriate functions $f(\bm{x})$ and $g(\bm{x})$ 
in the Poisson equation (\ref{po_eq}) such that 
the exact solution is given by
\begin{equation*}
  u_{\text{exact}}(\bm{x}) = \sum_{i=1}^{12} \sin(2k_1\pi x_i) + \sin(2k_2\pi x_i) + 0.1\sin(2k_3\pi x_i) + 0.05\sin(2k_4\pi x_i),
\end{equation*}
where $k_1=1, \ k_2=120, \, k_3=240, $ and $ k_4 = 480$.
To numerically solve this problem, we employ 
frequency-adaptive TNNs with 
$I=4$ adaptation steps. All initial settings are kept identical 
to those specified in the experimental configuration at 
the beginning of this section.         

The relative $L_2$ errors at each adaptive iteration are reported in 
Table \ref{table_po}. In the initial step ($It = 0$), the 
relative $L_2$ errors for CP-PINNs and TT-PINNs
are 2.360e-02 and 9.995e-01, respectively. 
To evaluate the point-wise distribution of error distribution, we randomly sample 
1,600 points in $\Omega$ and map the corresponding errors onto a $40\times 40$ 2D array. Note that this mapping is used purely
for visualization purposes and does not correspond to 
the actual spatial coordinates of the sampled points.  
At $It=0$, the point-wise errors
for CP-PINNs and TT-PINNs are illustrated in
Figure \ref{fig:tnn_12d_po_error_a} and 
Figure \ref{fig:tnn_12d_po_error_c}, respectively, 
with error magnitudes ranging approximately from $10^{-1}$ to $1$. These results clearly show that the initial approximation $u^0_{\text{net}}(\bm{x};\bm{\theta}_0^*)$ 
is not very accurate.
We then
perform the DFT on each component $u^0_{i,\alpha}(x_i;\bm{\theta}_0^*)$, and examine the average frequency coefficients. 
The results in the $x_1$-direction are presented in Figures~\ref{fig:tnn_12d_po_fre_a} and~\ref{fig:tnn_12d_po_fre_c} for CP-PINNs and TT-PINNs, respectively. 
The frequency spectrum of CP-PINNs shows clear peaks at $k=1, 120, 240, 480$,
whereas the TT-PINNs exhibit a smoother frequency distribution.
Based on these observations, we set the parameter in equation~(\ref{num_get}) as $M=5$ for CP-PINNs and $M=10$ for TT-PINNs. 

\begin{figure}[H]
  \centering
  \begin{subfigure}[b]{0.35\textwidth}
    \includegraphics[width=\textwidth]{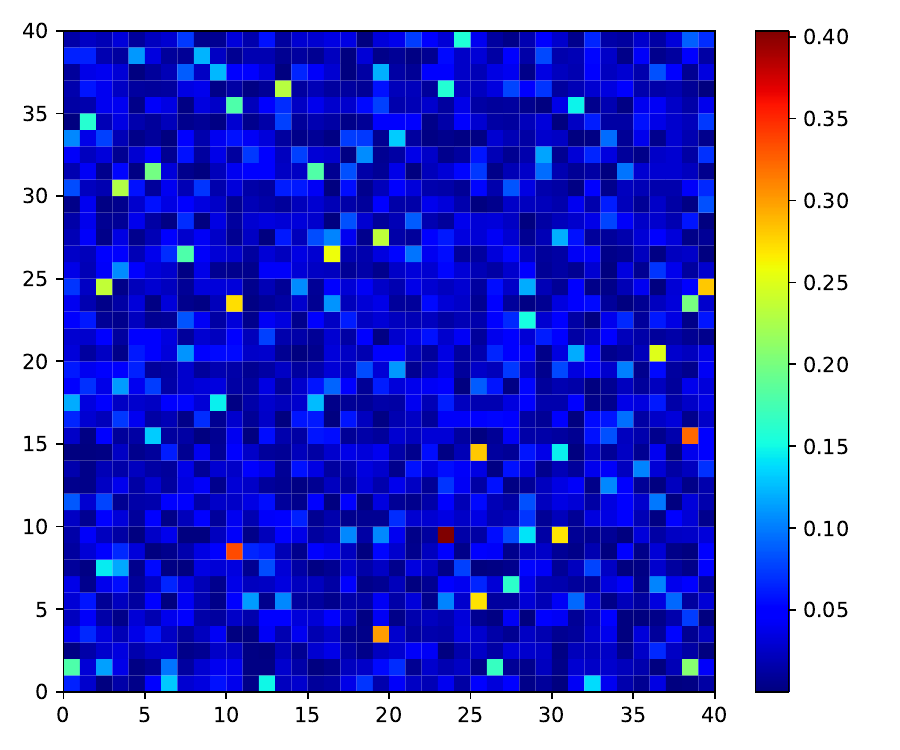}
    \caption{Point-wise errors for CP-PINNs and $It=0$.}
    \label{fig:tnn_12d_po_error_a}
  \end{subfigure}
  \begin{subfigure}[b]{0.35\textwidth}
    \includegraphics[width=\textwidth]{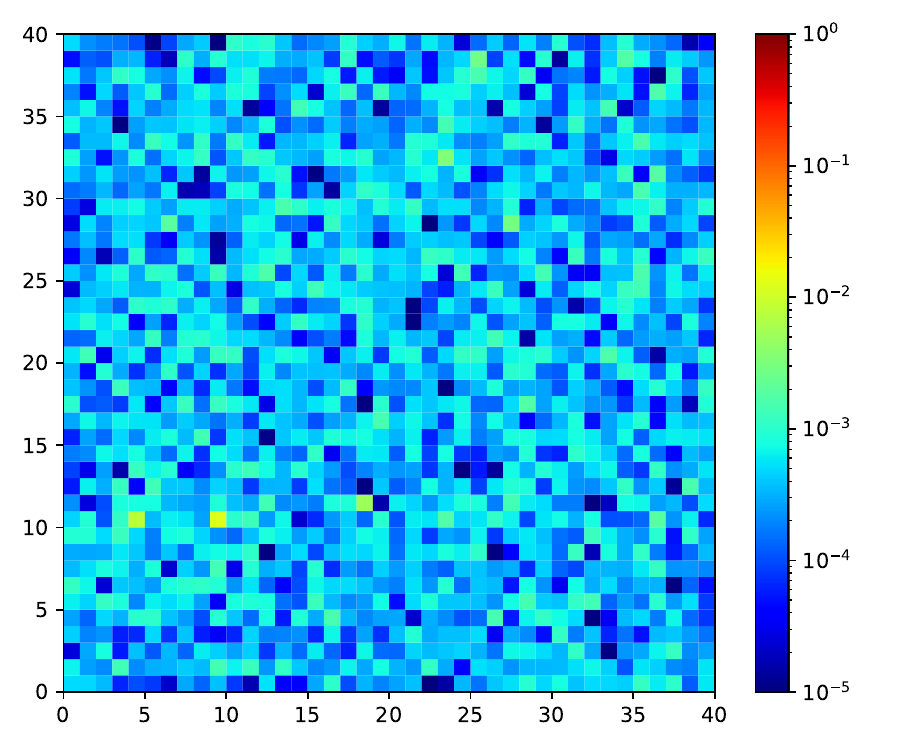}
    \caption{Point-wise errors for CP-PINNs and $It=4$.}
    \label{fig:tnn_12d_po_error_b}
  \end{subfigure}
  \begin{subfigure}[b]{0.35\textwidth}
      \includegraphics[width=\textwidth]{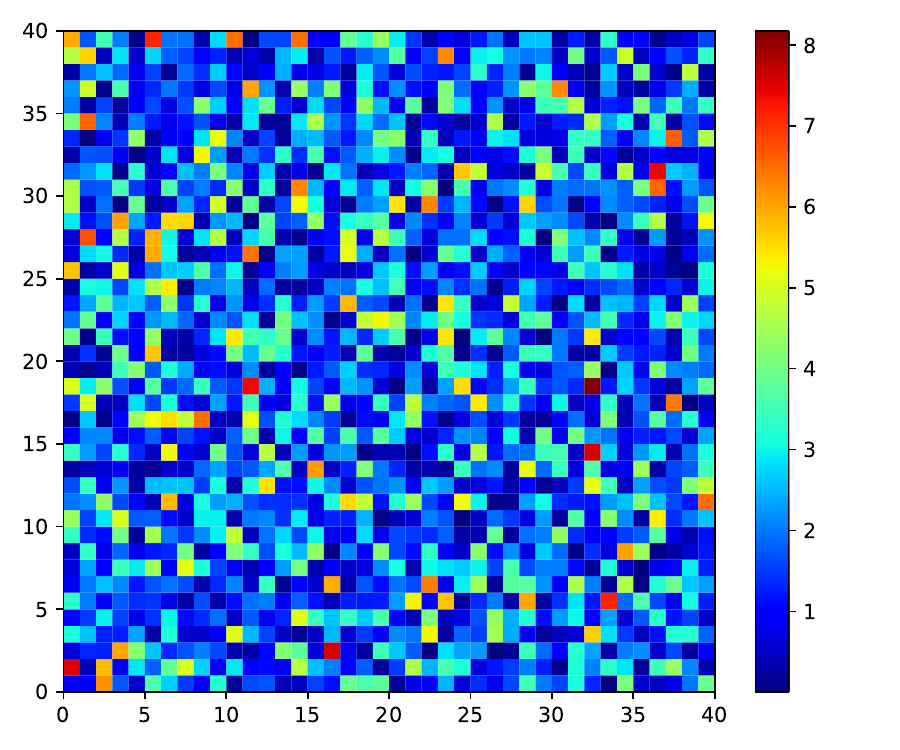}
      \caption{Point-wise errors for TT-PINNs and $It=0$.}
      \label{fig:tnn_12d_po_error_c}
  \end{subfigure}
  \begin{subfigure}[b]{0.35\textwidth}
      \includegraphics[width=\textwidth]{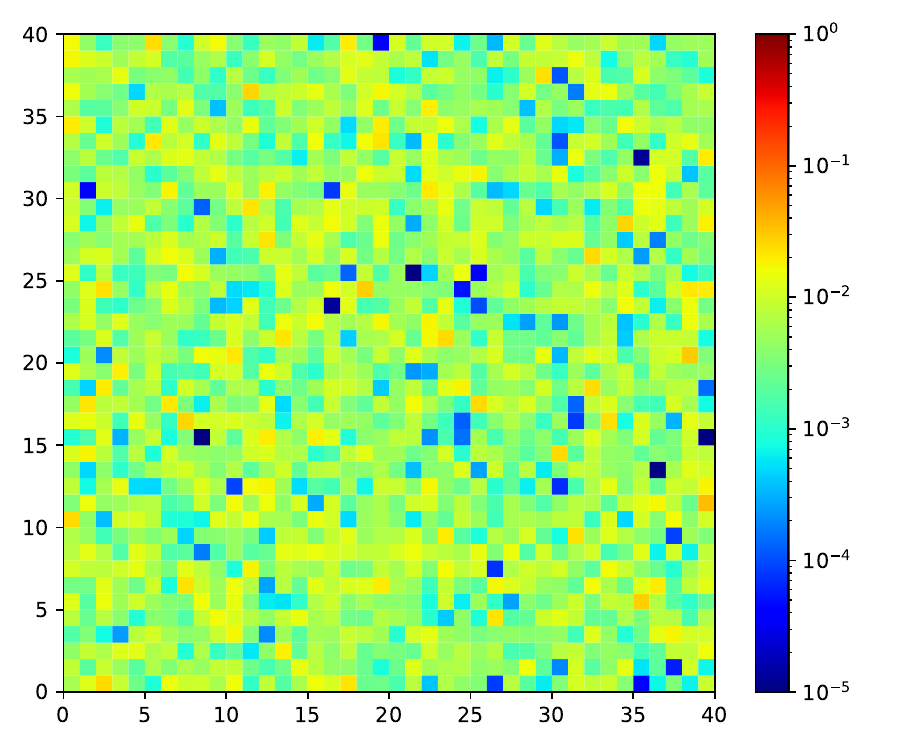}
      \caption{Point-wise errors for TT-PINNs and $It=4$.}
      \label{fig:tnn_12d_po_error_d}
  \end{subfigure}
  \caption{\enspace Point-wise errors for 
  Poisson equation (\ref{po_eq}) with $d=12$.}
  \label{fig:tnn_12d_po_error}
\end{figure}

Using Algorithm~\ref{al}, we construct the frequency feature set $\mathbb{B}_1$,
which is then used to initialize the updated network 
$u_{\text{net}}^1(\bm{x};\bm{\theta})$. 
After retraining, we obtain the approximation $u^1_{\text{net}}(\bm{x};\bm{\theta}_1^*)$ in the first adaptive step.
For CP-PINNs, since the dominant frequencies are selected in $\mathbb{B}_1$, the relative $L_2$ is reduced to 2.088e-04, substantially lower than that of TT-PINNs (6.702e-02). 
As the iterations continue, the frequency-adaptive TT-PINNs gradually capture all key frequency components, as shown in Figure~\ref{fig:tnn_12d_po_fre_d}.  
By the final iteration, the
relative $L_2$ errors are further reduced to 2.825e-04 for CP-PINNs and 3.338e-03 for TT-PINNs. 
The corresponding point-wise errors at 1,600 randomly sampled points, 
illustrated in Figure \ref{fig:tnn_12d_po_error_b} 
and Figure \ref{fig:tnn_12d_po_error_d}, 
demonstrate a substantial reduction, by several 
orders of magnitude, compared to that at $It=0$. 
These results underscore that, although TT-PINNs may perform poorly in the early stages, the proposed frequency-adaptive algorithm significantly enhances their performance, ultimately achieving high accuracy.

\begin{table}[H]
  \begin{center}
  \caption{Relative $L_2$ error at each adaptive step for Poisson equation (\ref{po_eq}).}
  \begin{tabular}{cccccc}
    \hline
    Adaptive step ${It}$ & 0 & 1 & 2 & 3 & 4\\
    \hline 
    CP-PINNs: $d=3$ & 9.447e-02 & 6.155e-03 & 3.279e-03& 3.920e-03 & 3.850e-03 \\
    \hline
    TT-PINNs: $d=3$ & 7.381e-01 & 1.583e-02 & 2.949e-03& 3.879e-03 & 1.490e-03 \\
    \hline
    CP-PINNs: $d=12$ & 2.360e-02 & 2.088e-04 & 4.267e-04& 4.051e-04 & 2.825e-04 \\
    \hline
    TT-PINNs: $d=12$ & 9.995e-01 & 6.702e-02 & 4.535e-03& 3.202e-03 & 3.338e-03 \\
    \hline
    \label{table_po}
  \end{tabular}
\end{center}
\end{table}

\subsection{Heat Equation}

To evaluate the performance of the proposed 
frequency-adaptive TNNs in 
solving spatio-temporal multi-scale problems, 
we consider a six-dimensional heat equation:
\begin{equation}
  \label{tnn_heat_eq}
  \begin{split}
    u_t(\bm{x}, t)& = \sum_{i=1}^6 \frac{1}{(k_i \pi)^2} u_{x_i x_i}(\bm{x},t), \quad (\bm{x},t)\in \Omega\times (0, T], \\
    u(\bm{x}, 0) & = h(\bm{x}), \quad  \bm{x} \in \Omega, \\
    u(\bm{x}, t) & = g(\bm{x}), \quad  (\bm{x}, t) \in \partial \Omega \times (0, T],
  \end{split}
\end{equation}
where $\Omega=(0, 1)^6$, $T=1$, and appropriate functions
$h(\bm{x})$ and $g(\bm{x})$ are chosen such that the 
exact solution $u(\bm{x}, t)$ is given by:
\begin{equation*}
  u_{\text{exact}}(\bm{x}, t) = \exp(-t)\left(\sum_{i=1}^3 \left(\sin(k_{2i-1}x_{2i-1}) + \cos(k_{2i}x_{2i})\right)\right),
\end{equation*}
with $k_1=k_2=k_3=200$, $k_4=k_5=k_6=400$. 
We employ the frequency-adaptive TNNs with $I=4$
adaptive iterations to numerically solve 
the multi-scale PDE (\ref{tnn_heat_eq}). 

\begin{figure}[H]
  \centering
  \begin{subfigure}[b]{0.35\textwidth}
    \includegraphics[width=\textwidth]{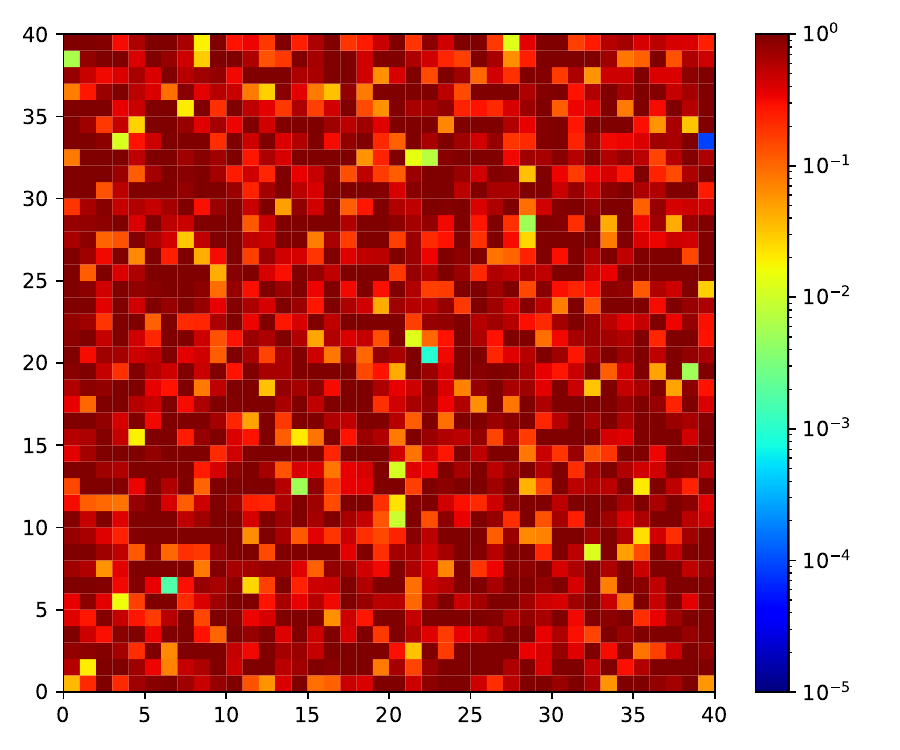}
    \caption{Point-wise errors for CP-PINNs and $It=0$.}
    \label{fig:tnn_6d_heat_error_a}
  \end{subfigure}
  \begin{subfigure}[b]{0.35\textwidth}
    \includegraphics[width=\textwidth]{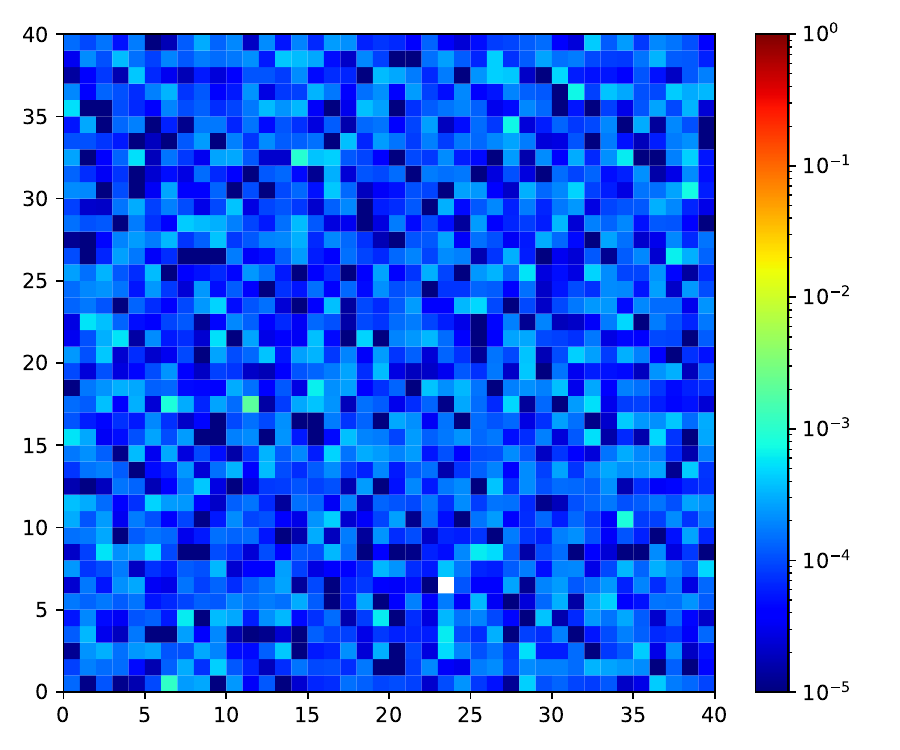}
    \caption{Point-wise errors for CP-PINNs and $It=4$.}
    \label{fig:tnn_6d_heat_error_b}
  \end{subfigure}
  \begin{subfigure}[b]{0.35\textwidth}
      \includegraphics[width=\textwidth]{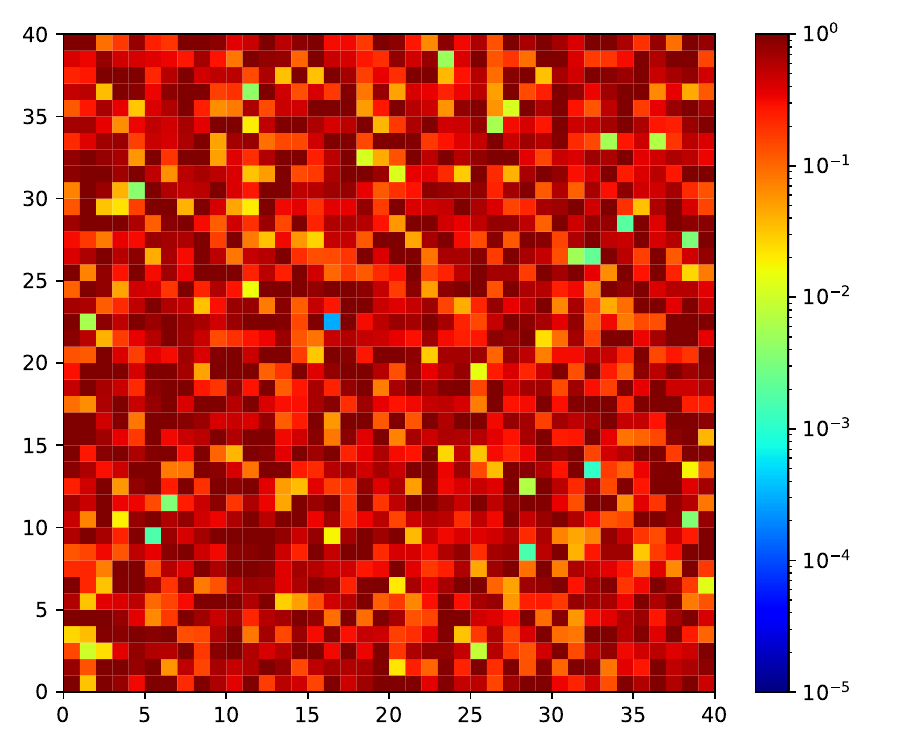}
      \caption{Point-wise errors for TT-PINNs and $It=0$.}
      \label{fig:tnn_6d_heat_error_c}
  \end{subfigure}
  \begin{subfigure}[b]{0.35\textwidth}
      \includegraphics[width=\textwidth]{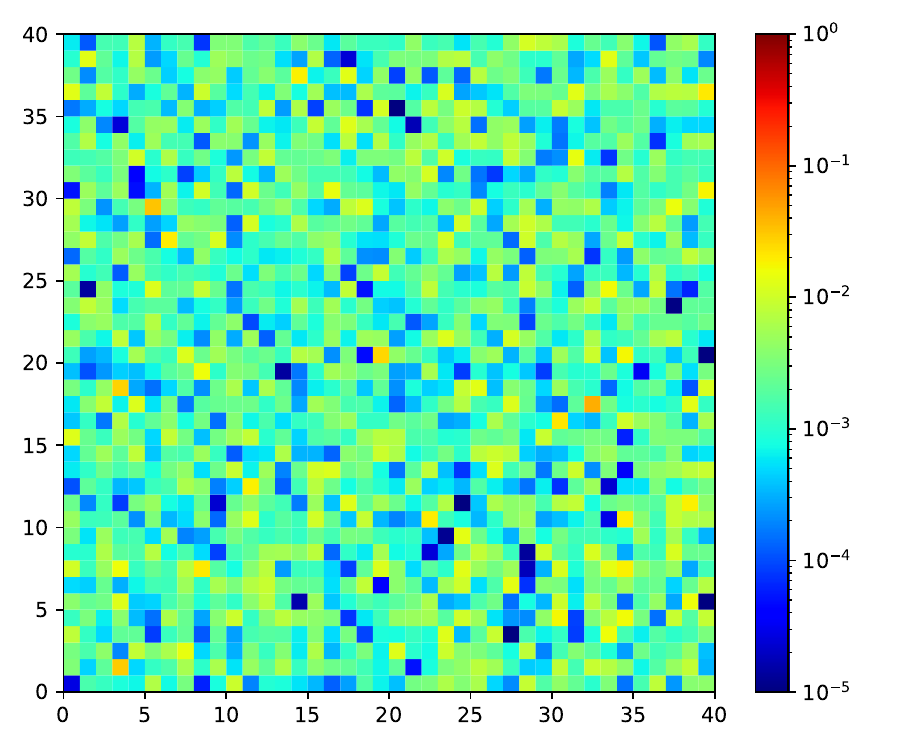}
      \caption{Point-wise errors for TT-PINNs and $It=4$.}
      \label{fig:tnn_6d_heat_error_d}
  \end{subfigure}
  \caption{\enspace Point-wise errors for heat equation (\ref{tnn_heat_eq}).}
  \label{fig:tnn_6d_heat_error}
\end{figure}

Following the approach proposed in \cite{lagaris1998artificial}, 
the initial condition is directly embedded into the neural 
network architecture, eliminating the need for a 
separate loss term. This not only ensures more accurate 
satisfaction of the initial condition but also facilitates 
more efficient extraction of frequency features. 
In this framework, the time dimension is treated in the same manner as the spatial dimensions, allowing the problem to be reformulated as a seven-dimensional PDE without explicitly distinguishing between time and space. 
All other hyperparameter settings 
remain consistent with the experimental configuration specified 
at the beginning of this section.          
We then employ the Algorithm~\ref{al} to solve the heat equation (\ref{tnn_heat_eq}), setting the parameter $M$ to 50.
To illustrate point-wise error behavior, we randomly 
select 1,600 points in the space-time domain $\Omega\times (0, T]$.
The error distributions at these points for $It=0$ and $It=4$ are
shown in Figure \ref{fig:tnn_6d_heat_error}. These results clearly demonstrate that Algorithm~\ref{al} reduces point-wise errors by one to three orders of magnitude after four adaptive iterations. 
The relative $L_2$ errors at each adaptive step are
reported in Table \ref{table_heat}. Both CP-PINNs and TT-PINNs achieve highly accurate solutions, with final relative errors of 1.605e-04 and 4.664e-03, respectively.

\begin{table}[H]
  \begin{center}
  \caption{Relative $L_2$ error at each adaptive step for heat equation (\ref{tnn_heat_eq}).}
  \begin{tabular}{cccccc}
    \hline
    Adaptive step ${It}$ & 0 & 1 & 2 & 3 & 4\\
    \hline 
    CP-PINNs & 5.432e-02 & 2.909e-04 & 1.992e-04& 1.896e-04 & 1.605e-04 \\
    \hline
    TT-PINNs & 2.450e-01 & 7.563e-03& 2.109e-03& 2.403e-03 & 4.664e-03 \\
    \hline
    \label{table_heat}
  \end{tabular}
\end{center}
\end{table}

\subsection{Wave Equation}

The third benchmark test case involves a six-dimensional wave equation given by:
\begin{equation}
  \label{tnn_wave_eq}
  \begin{split}
    &u_{tt}(\bm{x}, t) - \frac{1}{75} \Delta_{\bm{x}} u(\bm{x}, t) = 0, \quad (\bm{x}, t) \in \Omega \times (0,T], \\
    &u(\bm{x}, t) = 0, \quad (\bm{x}, t) \in \partial \Omega \times (0, T], \\
    &u(\bm{x}, 0) = \sum_{i=1}^d \sin(150\pi x_i) + \sin(300\pi x_i), \quad \bm{x} \in \Omega, \\
    &u_t(\bm{x}, 0) = 0, \quad \bm{x} \in \Omega. 
  \end{split}
\end{equation}
We take $\Omega = (0, 1)^6, \ T = 1$. 
The exact solution is given by 
\begin{equation*}
  u_{\text{exact}}(\bm{x}, t) = \sum_{i=1}^6 \sin(150\pi x_i) \cos(2\pi t) + \sin(300\pi x_i)\cos(4\pi t).
\end{equation*}
To solve the wave equation \eqref{tnn_wave_eq} numerically, we employ frequency-adaptive TNNs with $I=4$ adaptive iterations.   
As suggested in \cite{wang2024respecting}, to address the computational challenges associated with the time dimension, we adopt the following loss function:
\begin{equation}
  \label{wa_loss}
  \begin{split}
    \mathcal{L}(\bm{\theta})  =& \mathcal{L}_{u}(\bm{\theta}) + \mathcal{L}_{u_t}(\bm{\theta}) + \mathcal{L}_{r}(\bm{\theta})  =  \frac{\omega_u}{N_{u}}\sum\limits_{i=1}^{N_{u}}|u_{\text{net}}(\bm{x}^i_b, t_{u}^i;\bm{\theta})|^2 \\
    &+ 
     \frac{\omega_{u_t}}{N_{u_t}}\sum\limits_{i=1}^{N_{u_t}}\big|\frac{\partial u_{\text{net}}}{\partial t}(\bm{x}_{u_t}^i, 0;\bm{\theta})\big|^2
    +  \frac{\omega_{r}(t_r^i)}{N_r}\sum\limits_{i=1}^{N_r} \big| \frac{\partial^2 u_{\text{net}}}{\partial t^2}(\bm{x}_r^i, t_r^i;\bm{\theta}) - 
    \frac{1}{75} \Delta_{\bm{x}} u_{\text{net}}(\bm{x}_r^i, t_r^i;\bm{\theta})\big|^2,
  \end{split}
\end{equation}
where $\bm{x}^i_b\in \partial \Omega$, and the 
weights are set to $\omega_u=10,000$ and $\omega_{u_t}=1,000$. 
The initial condition is directly embedded
into the neural network, eliminating the need for an 
additional loss term. 
In (\ref{wa_loss}), the weight $\omega_{r}(t)$ 
is a continuous gate function defined as 
$\omega_{r}(t)=(1-\tanh(5({t} - \mu )))/2$, where $\mu$
is the scalar shift parameter controlling 
the fraction of time revealed to the model. 
The shift 
parameter $\mu$ is updated at each training epoch according to: 
\begin{equation*}
  \mu_{n+1} =\mu_{n} + 0.002 \mathrm{e}^{-0.005  \mathcal{L}_r(\bm{\theta})},
\end{equation*}
where $\mathcal{L}_r(\bm{\theta})$ represents the 
PDE loss at the $n$th training epoch. 
All other initialization and hyperparameter settings remain consistent with those specified at the beginning of this section.

To visualize the point-wise errors of CP-PINNs and TT-PINNs, we randomly select 1,600 sample points in the space-time domain $\Omega \times (0, T]$.
The corresponding error distributions at $It=0$ and $It=4$ are
shown in Figure \ref{fig:tnn_6d_wave_error}. 
These results clearly demonstrate that frequency adaptation
significantly reduces point-wise errors for both CP-PINNs 
and TT-PINNs, achieving reductions of at least two orders of magnitude. 
The relative $L_2$ errors at each adaptive step 
are summarized in Table \ref{table_wave}. Specifically, 
the error for CP-PINNs decreases from 
1.673e-01 to 4.797e-04, while that 
for TT-PINNs it is reduced from 
1.149 to 5.566e-03. This example illustrates that
frequency-adaptive TNNs can be effectively integrated
with classical techniques \cite{wang2024respecting} to solve the wave equation, resulting in substantial improvements in accuracy.

\begin{figure}[H]
  \centering
  \begin{subfigure}[b]{0.35\textwidth}
    \includegraphics[width=\textwidth]{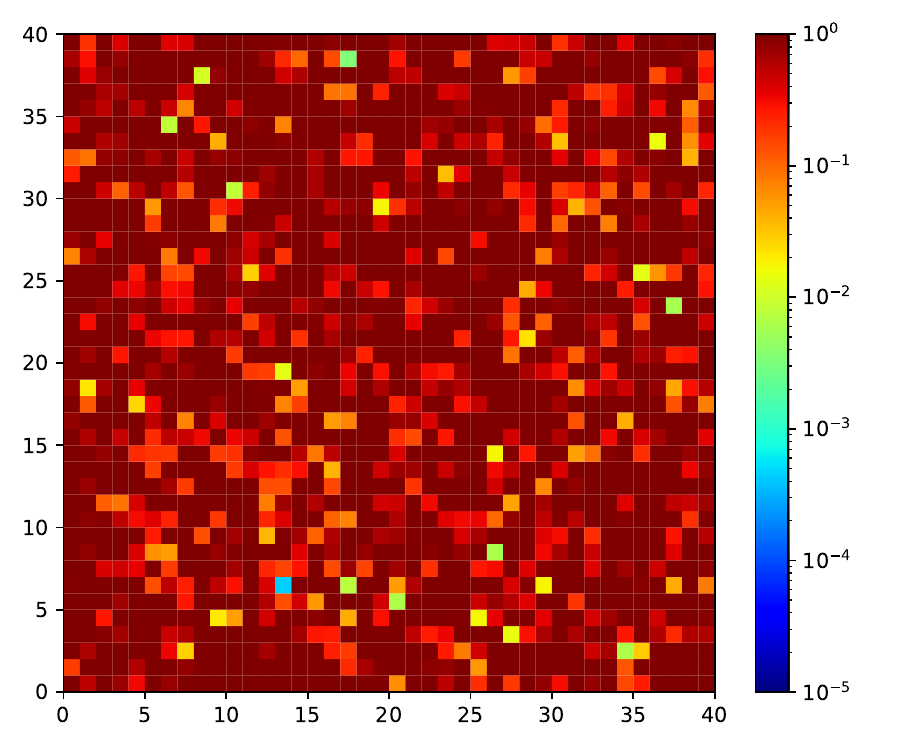}
    \caption{Point-wise errors for CP-PINNs and $It=0$.}
    \label{fig:tnn_6d_wave_error_a}
  \end{subfigure}
  \begin{subfigure}[b]{0.35\textwidth}
    \includegraphics[width=\textwidth]{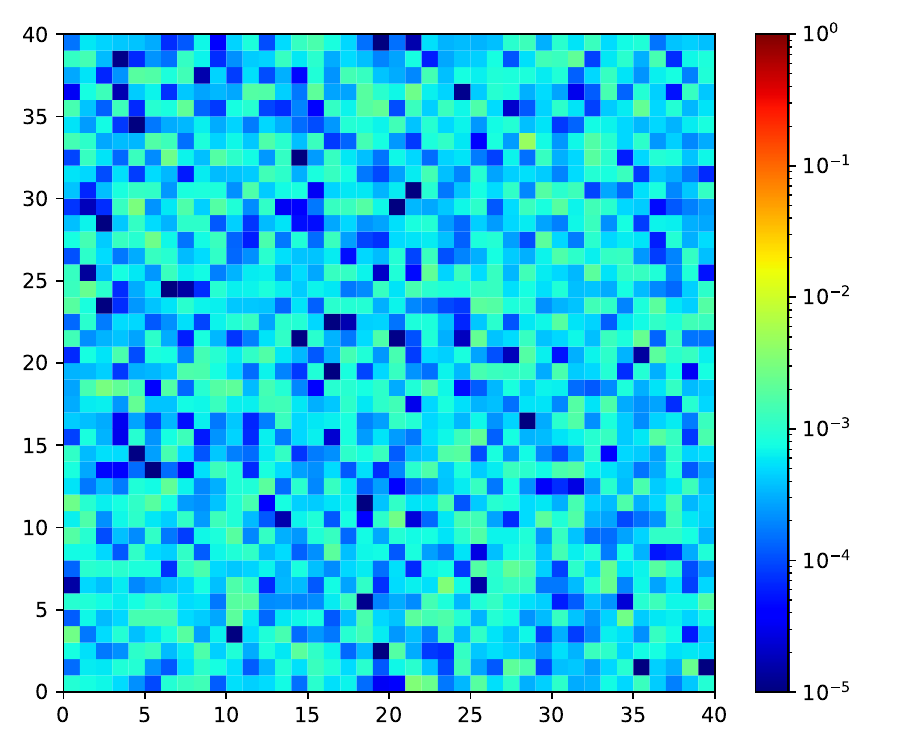}
    \caption{Point-wise errors for CP-PINNs and $It=4$.}
    \label{fig:tnn_6d_wave_error_b}
  \end{subfigure}
  \begin{subfigure}[b]{0.35\textwidth}
      \includegraphics[width=\textwidth]{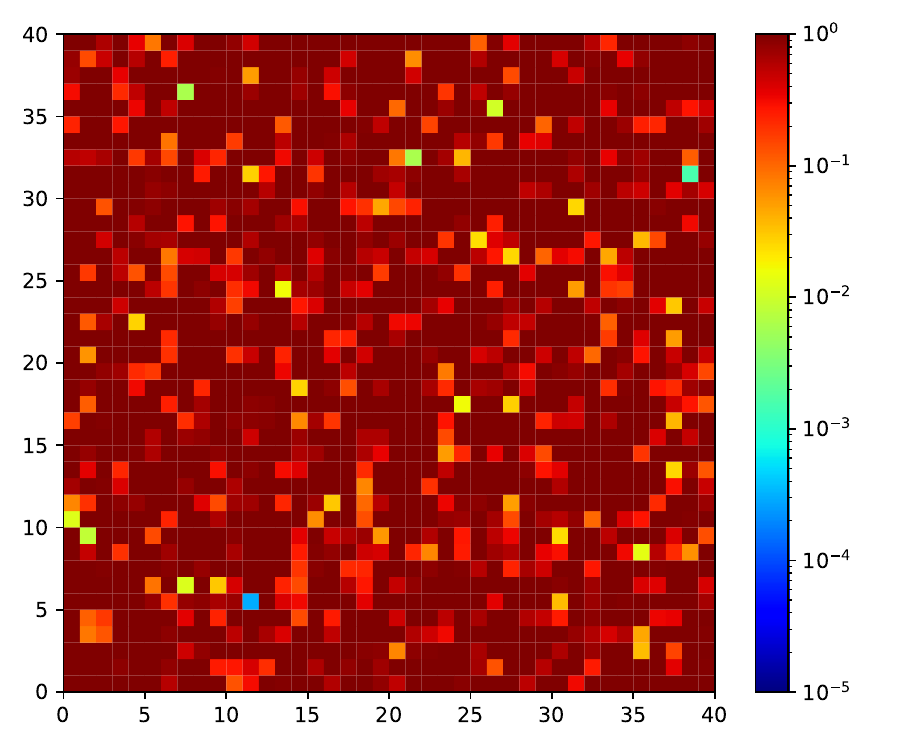}
      \caption{Point-wise errors for TT-PINNs and $It=0$.}
      \label{fig:tnn_6d_wave_error_c}
  \end{subfigure}
  \begin{subfigure}[b]{0.35\textwidth}
      \includegraphics[width=\textwidth]{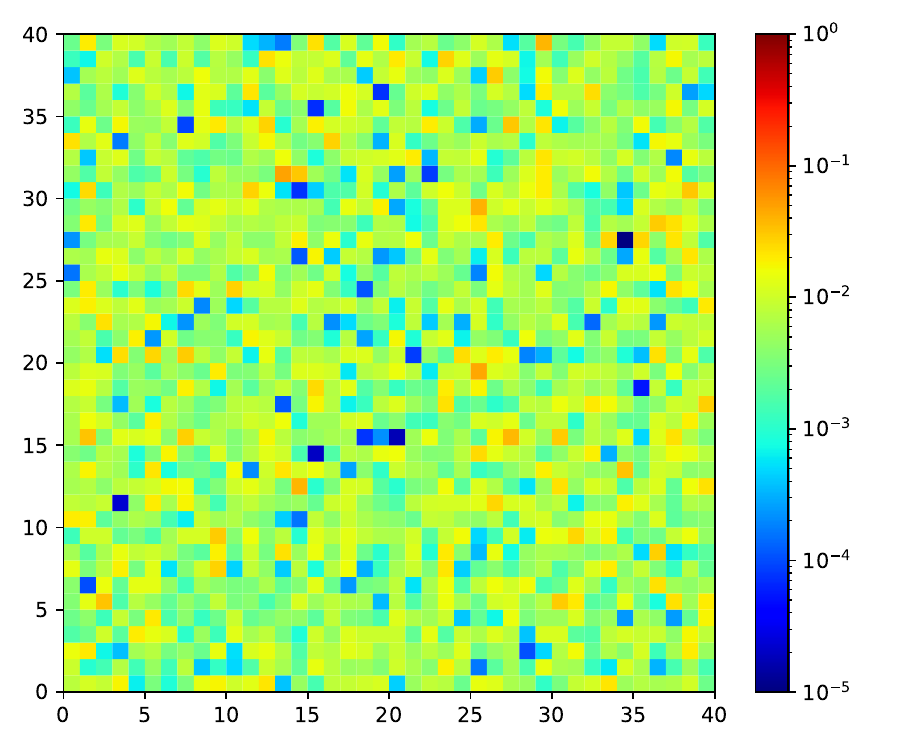}
      \caption{Point-wise errors for TT-PINNs and $It=4$.}
      \label{fig:tnn_6d_wave_error_d}
  \end{subfigure}
  \caption{\enspace Point-wise errors for wave equation (\ref{tnn_wave_eq}).}
  \label{fig:tnn_6d_wave_error}
\end{figure}

\begin{table}[H]
  \begin{center}
  \caption{Relative $L_2$ error at each adaptive step for wave equation (\ref{tnn_wave_eq}).}
  \begin{tabular}{cccccc}
    \hline
    Adaptive step ${It}$ & 0 & 1 & 2 & 3 & 4\\
    \hline 
    CP-PINNs & 1.673e-01 & 6.997e-04 & 4.760e-04& 4.590e-04 & 4.797e-04 \\
    \hline
    TT-PINNs & 1.149e+00 & 6.571e-01& 4.063e-01& 8.892e-03 & 5.566e-03 \\
    \hline
    \label{table_wave}
  \end{tabular}
\end{center}
\end{table}

\subsection{Helmholtz Equation}

Consider the following Helmholtz equation:
\begin{equation}
  \label{tnn_he_eq}
  \begin{split}
    - \Delta u(\bm{x}) + \lambda^2 u(\bm{x}) &= f(\bm{x}), \quad \bm{x}\in \Omega, \\
    u(\bm{x}) &= g(\bm{x}), \quad \bm{x} \in \partial \Omega,
  \end{split}
\end{equation}
where $\lambda=1$ and choose appropriate 
$f(\bm{x})$ and $g(\bm{x})$ such that the exact solution is:
\begin{equation*}
  u_{\text{exact}}(\bm{x}) = \sum_{i=1}^6 \sin(360 \pi x_i).
\end{equation*} 
We then employ 
the frequency-adaptive TNNs with 
$I=4$ to numerically solve the 
six-dimensional Helmholtz equation (\ref{tnn_he_eq}). 

\begin{table}[H]
  \begin{center}
  \caption{Relative $L_2$ error at each adaptive step for Helmholtz equation (\ref{tnn_he_eq}).}
  \begin{tabular}{cccccc}
    \hline
    Adaptive step ${It}$ & 0 & 1 & 2 & 3 & 4\\
    \hline 
    CP-PINNs & 2.419e-01 & 1.461e-03& 2.789e-04& 2.451e-04 & 2.402e-04 \\
    \hline
    TT-PINNs & 9.999e-01 & 1.354e-03 & 4.041e-03& 3.784e-03 & 3.545e-03 \\
    \hline
    \label{table_helmholtz}
  \end{tabular}
\end{center}
\end{table}

The corresponding relative 
$L_2$ errors for each frequency-adaptive iteration are reported 
in Table \ref{table_helmholtz}. As observed, the relative 
$L_2$ errors for CP-PINNs decrease substantially 
from an initial value of 2.419e-01 to 2.402e-04. 
Similarly, the error for TT-PINNs is reduced from 
9.999e-01 to 3.545e-03. These results demonstrate clearly 
the accuracy and robustness of the proposed frequency-adaptive 
TNN algorithm in solving high-dimensional multi-scale Helmholtz 
problems.

\section{Conclusion}
\label{conclusion}

In this study, we first introduce two TNN architectures (CP-PINNs and TT-PINNs) and 
investigate their associated Frequency Principle 
(also known as spectral bias). Using Fourier analysis, we provide a theoretical
explanation of this phenomenon in high-dimensional settings. 
To mitigate spectral bias, we incorporate Fourier features
into the TNN framework, thereby
enhancing its ability to solve high-dimensional problems with significant high-frequency 
components. Building upon the structure of TNNs, we 
further propose a novel approach to extract frequency 
features of high-dimensional functions by performing the DFT to one-dimensional component functions. This leads to the 
development of frequency-adaptive TNNs, which dynamically 
adjust embedded frequency features based on the characteristics of the problem. Compared to fixed-frequency embedding, 
the adaptive strategy significantly improves the model's 
ability to capture complex, high-frequency behaviors. 
The proposed method demonstrates strong performance across 
a wide range of problems, including the Poisson equation, 
heat, wave and Helmholtz equation.

Despite its effectiveness, the current adaptive frequency framework 
has certain limitations. A key drawback is the need to retrain the 
network from scratch at each adaptive iteration, which incurs high 
computational costs. A promising direction for future work involves designing projection-based techniques to transfer learned representations between iterations, thereby accelerating convergence. Moreover, since the DFT relies on structured and regular domains, extending the method to irregular geometries remains a challenging but important area for future research.

%%%% Acknowledgments %%%%%%%%
\section*{Acknowledgments}
\bibliography{ref}

\appendix
\section{Computational Details}
\label{cd}
The gradient of spectral loss $L(k_x, k_y)$ with respect to $w_{x,j}$ is calculated as:
\begin{equation*}
  \begin{split}
  \frac{\partial L(k_x, k_y)}{\partial w_{x,j}} &= \overline{D(k_x, k_y)} \frac{\partial D(k_x, k_y)}{\partial w_{x,j}} + D(k_x, k_y)\frac{\partial \overline{D(k_x, k_y)}}{\partial w_{x,j}}\\
  &=|D(k_x, k_y)|\frac{a_j}{k_xk_y}\left(\left(\mathrm{e}^{-\mathrm{i}\theta(k_x,\, k_y)}\mathrm{e}^{\mathrm{i}\theta_j}+\mathrm{e}^{\mathrm{i}\theta(k_x,\, k_y)}\mathrm{e}^{-\mathrm{i}\theta_j}\right)\frac{\partial D_j}{\partial w_{x,j}}
  +\mathrm{i}\left(\mathrm{e}^{\mathrm{i}\theta(k_x,\, k_y)}\mathrm{e}^{-\mathrm{i}\theta_j}-\mathrm{e}^{-\mathrm{i}\theta(k_x,\, k_y)}\mathrm{e}^{\mathrm{i}\theta_j}\right)D_j\frac{k_x b_{x,j}}{w^2_{x,j}}\right)\\
  &=2|D(k_x, k_y)|\frac{a_j}{k_xk_y}\left(
  \cos(\theta_j-\theta(k_x,\, k_y))\frac{\partial D_j}{\partial w_{x,j}}
  +\sin(\theta_j-\theta(k_x,\, k_y))
  D_j\frac{k_x b_{x,j}}{w^2_{x,j}}\right)\\
  &=2|D(k_x, k_y)|D_j\frac{a_j}{k_xk_y}\frac{ 1}{w^2_{x,j}}\Big(
  \cos(\theta_j-\theta(k_x,\, k_y))\alpha_1
  +\sin(\theta_j-\theta(k_x,\, k_y))k_xb_{x,j}
  \Big)\\
  &=2|D(k_x, k_y)|D_j\frac{a_j}{k_xk_y}\frac{ 1}{w^2_{x,j}} \sqrt{\alpha_1^2+k_x^2b_{x,j}^2} \cos(\theta_j-\theta(k_x,\, k_y)-\theta_0),
    \end{split}
\end{equation*}
where 
$\alpha_1=\frac{ \left[w_{x,j}\sinh(\pi z_x/ 2)-\frac{\pi k_x}{2}\cosh(\pi z_x / 2)\right]}{\sinh(\pi z_x / 2)}
$ is bounded as $w_{x,j}$ tends to 0 and $\cos(\theta_0)=\alpha_1/\sqrt{\alpha_1^2+k_x^2b_{x,j}^2} $. 

The gradient of spectral loss $L(k_x, k_y)$ with respect to $b_{x,j}$ is calculated as:
\begin{equation*}
  \begin{split}
  \frac{\partial L(k_x, k_y)}{\partial b_{x,j}} &= \overline{D(k_x, k_y)} \frac{\partial D(k_x, k_y)}{\partial b_{x,j}} + D(k_x, k_y)\frac{\partial \overline{D(k_x, k_y)}}{\partial b_{x,j}}\\
  &=\mathrm{i}|D(k_x, k_y)|\frac{a_j}{w_{x,j}k_y}D_j\left(\mathrm{e}^{-\mathrm{i}\theta(k_x,\, k_y)}\mathrm{e}^{\mathrm{i}\theta_j}-\mathrm{e}^{\mathrm{i}\theta(k_x,\, k_y)}\mathrm{e}^{-\mathrm{i}\theta_j}\right)\\
  &=2|D(k_x, k_y)|\frac{a_j}{w_{x,j}k_y}D_j\sin(\theta_j-\theta(k_x,\, k_y))\\&
  =2|D(k_x, k_y)|\frac{a_j}{w_{x,j}k_y}D_j\cos(\theta_j-\theta(k_x,\, k_y)-\frac\pi2).
    \end{split}
\end{equation*}
The gradients of spectral loss $L(k_x, k_y)$ with respect to $w_{y,j}$ and $b_{y,j}$ can be obtained similarly.

% %%%% Bibliography  %%%%%%%%%%
% \begin{thebibliography}{99}
% \bibitem{Berger}M. J. Berger and P. Collela, Local adaptive mesh refinement
% for shock hydrodynamics,
% J. Comput. Phys., 82 (1989), 62-84.
% \bibitem{deBoor}C. de Boor,  Good Approximation By Splines With Variable Knots II, in Springer Lecture
%  Notes Series 363, Springer-Verlag, Berlin, 1973.
% \bibitem{TanTZ} Z. J. Tan, T. Tang and Z. R. Zhang, A simple moving mesh method for one- and
% two-dimensional phase-field equations, J. Comput. Appl. Math., to appear.
% \bibitem{Toro}E. F. Toro, Riemann Solvers and Numerical Methods for Fluid Dynamics,
% Springer-Verlag Berlin Heidelbert, 1999.
% \end{thebibliography}

\end{document}